\documentclass{article}

\usepackage{microtype}

\usepackage{graphicx}
\usepackage{subcaption}
\usepackage{booktabs} 
\usepackage{multirow}
\usepackage{multicol} 

\usepackage[table]{xcolor}
\usepackage{tcolorbox}

\usepackage{amsmath}
\usepackage{amssymb}
\usepackage{mathtools}
\usepackage{amsthm}

\usepackage{hyperref}

\usepackage{hyperref}

\usepackage[preprint]{icml2026}

\usepackage[capitalize,noabbrev]{cleveref}

\theoremstyle{plain}
\newtheorem{theorem}{Theorem}[section]

\newtheorem{lemma}[theorem]{Lemma}

\theoremstyle{definition}
\newtheorem{definition}[theorem]{Definition}

\theoremstyle{remark}
\newtheorem{remark}[theorem]{Remark}

\usepackage[textsize=tiny]{todonotes}

\icmltitlerunning{SpecTr-GBV: Multi-Draft Block Verification Accelerating Speculative Decoding}

\begin{document}

\twocolumn[
  \icmltitle{SpecTr-GBV:
    Multi-Draft Block Verification Accelerating Speculative Decoding}



  \icmlsetsymbol{equal}{*}

  \begin{icmlauthorlist}
    \icmlauthor{Yijun Lin}{ruc}
    \icmlauthor{Jinhao Sheng}{med}
    \icmlauthor{Qingyue Cai}{ruc}
    \icmlauthor{Feng Zhou$^\dagger$}{ruc,bj}
\end{icmlauthorlist}

\icmlaffiliation{ruc}{Center for Applied Statistics and School of Statistics, Renmin University of China}
\icmlaffiliation{med}{School of Intelligent Medicine, China Medical University}
\icmlaffiliation{bj}{Beijing Advanced Innovation Center for Future Blockchain and Privacy Computing}

\icmlcorrespondingauthor{Feng Zhou}{feng.zhou@ruc.edu.cn}

  \icmlkeywords{Machine Learning, ICML}

  \vskip 0.3in
]



\printAffiliationsAndNotice{}  

\begin{abstract}
  Autoregressive language models suffer from high inference latency due to their sequential decoding nature. Speculative decoding (SD) mitigates this by employing a lightweight draft model to propose candidate tokens, which are selectively verified by a larger target model. While existing methods either adopt multi-draft strategies to increase acceptance rates or block verification techniques to jointly verify multiple tokens, they remain limited by treating these improvements in isolation. In this work, we propose SpecTr-GBV, a novel SD method that unifies multi-draft and greedy block verification (GBV) into a single framework. By formulating the verification step as an optimal transport problem over draft and target token blocks, SpecTr-GBV improves both theoretical efficiency and empirical performance. We theoretically prove that SpecTr-GBV achieves the optimal expected acceptance length physically attainable within the framework of i.i.d. draft generation, and this bound improves as the number of drafts increases. Empirically, we evaluate SpecTr-GBV across five datasets and four baselines. Our method achieves superior speedup and significantly higher block efficiency while preserving output quality. In addition, we perform comprehensive ablation studies to evaluate the impact of various hyperparameters in the model.
\end{abstract}

\section{Introduction}

Autoregressive language models achieve state-of-the-art performance across diverse natural language processing tasks but suffer from high latency due to the sequential nature of token generation, where each token requires a full forward pass through the model~\citep{brown2020language,touvron2023llama,stern2018blockwise}. 
To mitigate the latency, speculative decoding (SD)~\citep{leviathan2023fast,chen2023accelerating} techniques have been developed, employing a smaller draft model to propose candidate tokens that are selectively verified by the target model while preserving distributional fidelity through a sequence of rejection sampling steps. 

Standard SD is typically limited to handling a single draft sequence. This constraint leads to only a small number of tokens being accepted in each SD iteration~\citep{sun2023spectr}. 
To address this limitation, several recent works~\citep{sun2023spectr,sun2024spechub,hu2025towards} have proposed multi-draft SD strategies. Among them, \textbf{SpecTr}~\citep{sun2023spectr} stands out as a representative method that leverages optimal transport (OT) \citep{villani2008optimal} for multi-draft verification. 
In SpecTr, given a prefix, the draft model is used to generate multiple i.i.d. draft sequences. For each token position (i.e., column across drafts, see \cref{fig:arch:a}), SpecTr employs OT to compute the optimal coupling between the draft and target distributions. Based on this coupling, it determines which draft token can be accepted, or whether all candidate tokens at that position should be rejected. 
It can be theoretically shown that as the number of draft sequences increases, the probability of accepting a draft token also increases, thereby allowing to accept more tokens on average~\citep{sun2023spectr}. 

SpecTr adopts a position-by-position verification procedure, i.e., the verification iterates over the draft positions sequentially until one of the following conditions is met: either all positions have one accepted token, in which case an extra token is sampled according to the target distribution; 
or some position has no accepted tokens, in which case all subsequent tokens are discarded, and the token at that position is sampled from a residual distribution. 
However, \citet{sun2403block} proved that the position-by-position verification procedure does not yield the optimal expected number of accepted tokens (see Lemma 1 in \citet{sun2403block}). To address this, they proposed \textbf{greedy block verification (GBV)} for the single-draft setting, which verifies the entire block of draft tokens jointly rather than checking each token independently. Theoretically, GBV achieves the optimal expected number of accepted tokens within a single iteration. 

This naturally raises a key question: since both multi-draft and GBV improve the number of accepted tokens, \textbf{can we combine the two to further enhance decoding speedup?}
In principle, integrating advances from both multi-draft and GBV could yield further gains—this insight forms the basis of our proposed approach.
We propose SpecTr with greedy block verification (\textbf{SpecTr-GBV}), which extends SpecTr by replacing its position-by-position verification with GBV. Alternatively, it can be viewed as an extension of GBV from the single-draft setting to the multi-draft setting. 

The high-level idea of SpecTr-GBV is to generate multiple i.i.d. draft sequences, and formulate the verification procedure as an OT problem between the multiple draft token blocks and the target token block.
Theoretically, we show that SpecTr-GBV preserves the fidelity of the output distribution. Moreover, for a fixed number of draft sequences, it achieves the optimal expected number of accepted tokens in each iteration. As the number of draft sequences increases, this optimal expected number also increases accordingly. 
To the best of our knowledge, our work is the first method unifying multi-draft and block verification strategies within a single framework. 

Our main contributions are as follows: 
\begin{itemize}

    \item We propose a novel SD method, SpecTr-GBV, which unifies multi-draft and block verification strategies, thereby achieving longer acceptance lengths and consequently reducing the number of target model calls to improve decoding efficiency. 

    \item We theoretically prove that SpecTr-GBV achieves the optimal expected acceptance length physically attainable within the framework of i.i.d. draft generation. 
    Moreover, this optimal expected number increases as the number of draft sequences grows. 

    \item We conduct extensive comparisons on five datasets against four baselines. The results show that SpecTr-GBV consistently outperforms standard SD, SpecTr, and GBV in terms of both block efficiency and speedup ratio. In addition, we perform comprehensive ablation studies to evaluate the impact of various hyperparameters in the model. 
\end{itemize}

\section{Related Work}
SD has been widely studied in both single-draft and multi-draft settings. In the single-draft case, most prior work focuses on improving the drafting phase using techniques such as model distillation or document retrieval~\citep{zhou2023distillspec, liu2023online, he2023rest, fu2024break, zhang2023draft}, while relatively fewer efforts target verification. Notably, \citet{sun2403block, sun2024optimal} improve efficiency by verifying token blocks jointly instead of token-by-token.
In the multi-draft setting, recent work explores more advanced verification strategies. Some generate i.i.d.\ drafts~\citep{sun2023spectr,khisti2024importanceweighted}, while others construct tree-structured candidates across steps~\citep{miao2024specinfer,chen2024sequoia,yang2024multi,lu2024improving}. 
Theoretical studies such as \citet{sun2024spechub} and \citet{hu2025towards} analyze optimal acceptance under different sampling schemes. These approaches typically refine token-level verification.

In contrast to prior work that focuses solely on either multi-draft or block-level verification, our method unifies both by extending block verification to the multi-draft setting, offering stronger theoretical guarantees and improved decoding efficiency.

\section{Preliminaries}
In this section, we provide an overview of SD, SpecTr and GBV.  

\subsection{Speculative Decoding}

Autoregressive sampling in large language models (LLMs) is inherently sequential and often inefficient. SD~\citep{leviathan2023fast, chen2023accelerating} mitigates this inefficiency by offloading token generation to a smaller draft model, denoted as $\mathcal{M}_s$. Given a prefix $c$, the draft model sequentially generates a candidate sequence of length $L$ as follows:
\begin{equation*}
x^{(i)} \sim \mathcal{M}_s(\cdot \mid c, x^{i-1}), \quad i = 1, \dots, L,
\end{equation*}
where $\mathcal{M}_s(\cdot \mid c, x^{i-1})$ represents the conditional distribution of draft model over the token space, $x^{(i)}$ is the $i$-th generated token, and $x^{i-1} = \{x^{(1)}, \ldots, x^{(i-1)}\}$ denotes the sequence of previously generated tokens. 
The candidates are then verified in parallel by the target model, denoted as $\mathcal{M}_b$, which provides the true conditional distributions $\left\{ \mathcal{M}_{b}(\cdot \mid c, x^{i-1}) \right\}_{i=1}^{L+1}$. 
In the following, we simplify notation by letting $p(\cdot \mid x^{i-1}) = \mathcal{M}_{s}(\cdot \mid c, x^{i-1})$ and $q(\cdot \mid x^{i-1}) = \mathcal{M}_{b}(\cdot \mid c, x^{i-1})$. 
Verification is performed via a sequence of token-level rejection sampling. Each draft token $x^{(i)}$ is accepted with probability:
\[
\alpha(x^{(i)})=\min\left\{1, \frac{q(x^{(i)}\mid x^{i-1})}{p(x^{(i)}\mid x^{i-1})} \right\}.
\]
If $x^{(i)}$ is the first token to be rejected, all subsequent tokens are discarded, and a replacement is sampled from a residual distribution: 
\[
p_{\text{res}}(\cdot\mid x^{i-1})=\text{norm}\left( \max\left\{q(\cdot\mid x^{i-1}) - p(\cdot\mid x^{i-1}), 0\right\} \right),
\]
where $\text{norm}(\cdot)$ denotes normalization. 
If all $L$ draft tokens are accepted, an additional token is sampled from the target model: $x^{(L+1)} \sim \mathcal{M}_b(\cdot \mid c, x^L)$. 
The accepted tokens are appended to the current prefix, and the process is repeated until an end-of-sequence token is generated. 

\subsection{SpecTr}
Standard SD relies on a single draft sequence. To improve acceptance rates, SpecTr~\citep{sun2023spectr} extends SD to the multi-draft setting by generating $K$ i.i.d. draft sequences given a prefix, thereby expanding the candidate space. For each position $i$ (i.e., each column across the drafts, see \cref{fig:arch:a}), the goal is to accept one valid token from the $K$ candidates $x^{(i)}_1, \ldots, x^{(i)}_K$. SpecTr formulates this verification step as an OT problem. 
Define $p^{\oplus K}(\cdot\mid x^{i-1})$ as the product distribution over $K$ independent samples from $p(\cdot\mid x^{i-1})$. The goal is to minimize the cost function $C$:
\begin{equation*}
    \min_{\pi \in \Pi(p^{\oplus K}, q)} C(\pi) = \mathbb{E}_{x_1,\ldots,x_K, y \sim \pi} \left[ \mathbb{I}(y \notin \{x_1, \ldots, x_K\}) \right],
\label{otm}
\end{equation*}
where $\mathbb{I}(\cdot)$ is the indicator function, $\pi(x_1, \ldots, x_K, y)$ is a coupling between $p^{\oplus K}$ and $q$, satisfying marginal constraints:
$\sum_y \pi = p^{\oplus K}, \quad \sum_{x_1, \ldots, x_K} \pi = q$. $\Pi(\cdot, \cdot)$ denotes the set of all valid couplings.

The discrete OT problem above can be solved via linear programming~\citep{cuturi2013sinkhorn,kantorovich2006translocation}, but its runtime is exponential in $k$~\citep{sun2023spectr,den2012probability,pele2009fast}, making it impractical for large vocabularies or draft counts. To address this, \citet{sun2023spectr} propose \textsc{K-SEQ}, an efficient approximation that iterates over draft tokens at each position and accepts each token with probability: 
\[
\alpha(x^{(i)})=\min\left\{1, \frac{q(x^{(i)}\mid x^{i-1})}{\rho \cdot p(x^{(i)}\mid x^{i-1})} \right\},
\]
where $\rho \in [1, K]$ is a scaling factor determined by solving the equation
$1 - (1 - \beta(\rho))^K = \rho \cdot \beta(\rho)$ with $\beta(\rho) = \sum_{x} \min \left\{ p(x\mid x^{i-1}), \frac{q(x\mid x^{i-1})}{\rho} \right\}$. 
The computational complexity is given by $O(|\Omega| \log(K))$, with $|\Omega|$ denoting the vocabulary size.

For each position, \textsc{K-SEQ} outputs the first accepted token, and draft sequences matching the accepted token are retained for the next step. If no candidate is accepted, all subsequent tokens are discarded, and a replacement is sampled from a residual distribution: 
\begin{equation*}
    p_{\text{res}}(\cdot| x^{i-1})= \frac{q(\cdot| x^{i-1})-\min \left\{ p(\cdot| x^{i-1}), \frac{q(\cdot| x^{i-1})}{\rho} \right\}\rho}{1-\rho\beta(\rho)}. 
\end{equation*}
If all positions have accepted tokens, an additional token is sampled from the target model: $x^{(L+1)} \sim \mathcal{M}_b(\cdot \mid c, x^L)$. 
Theoretically, it can be shown that as $K$ increases, the likelihood of accepting a draft token also increases, which in turn leads to further speedups.

\begin{figure*}[t]
    \centering
    \begin{subfigure}{0.48\textwidth}
        \includegraphics[width=0.90\linewidth, trim=0 90 0 95, clip]{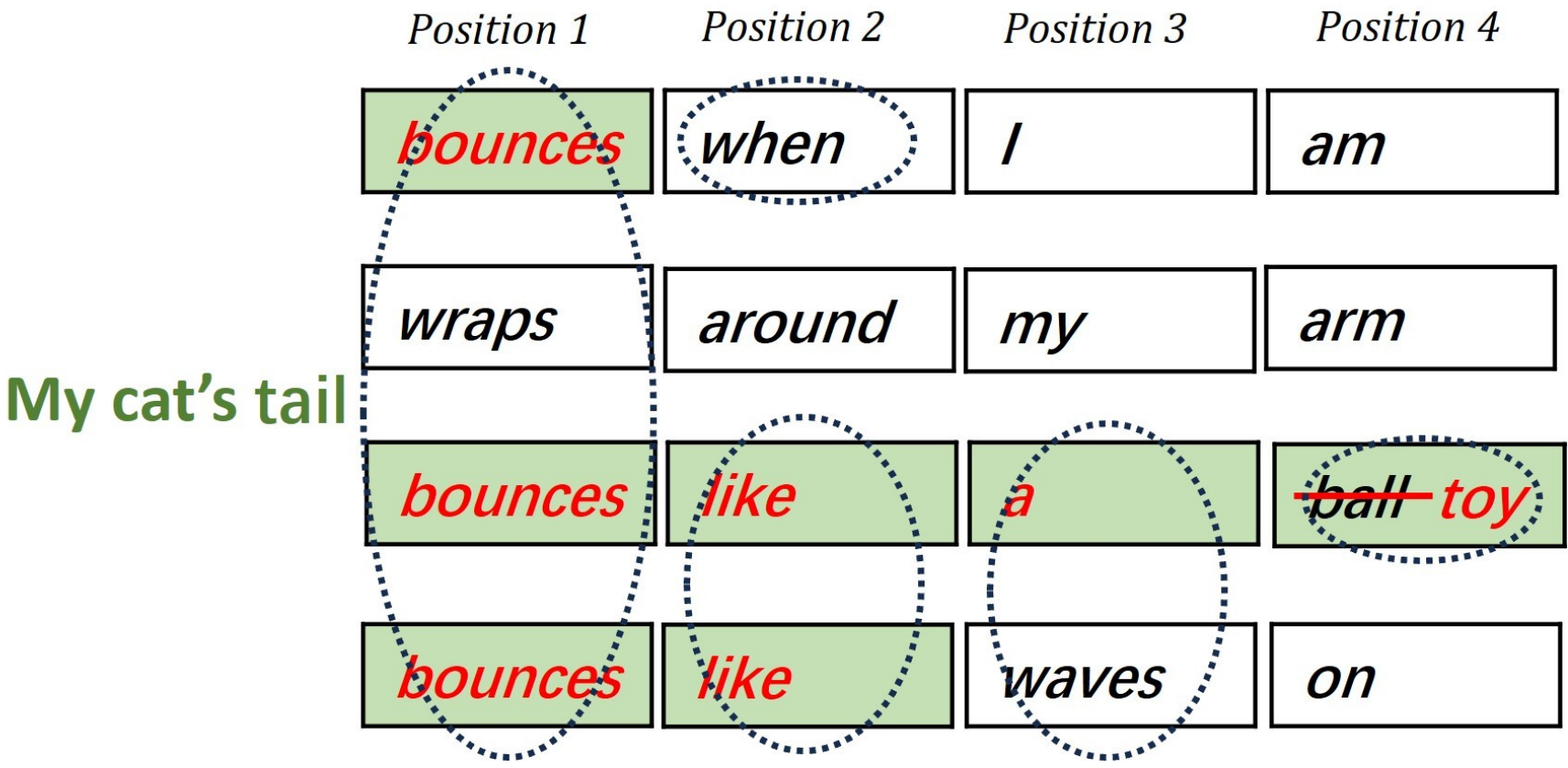}
        \caption{SpecTr}
    \label{fig:arch:a}
    \end{subfigure}
    \hfill
    \begin{subfigure}{0.48\textwidth}
        \raisebox{0.5mm}{\includegraphics[width=0.90\linewidth, trim=0 90 0 90, clip]{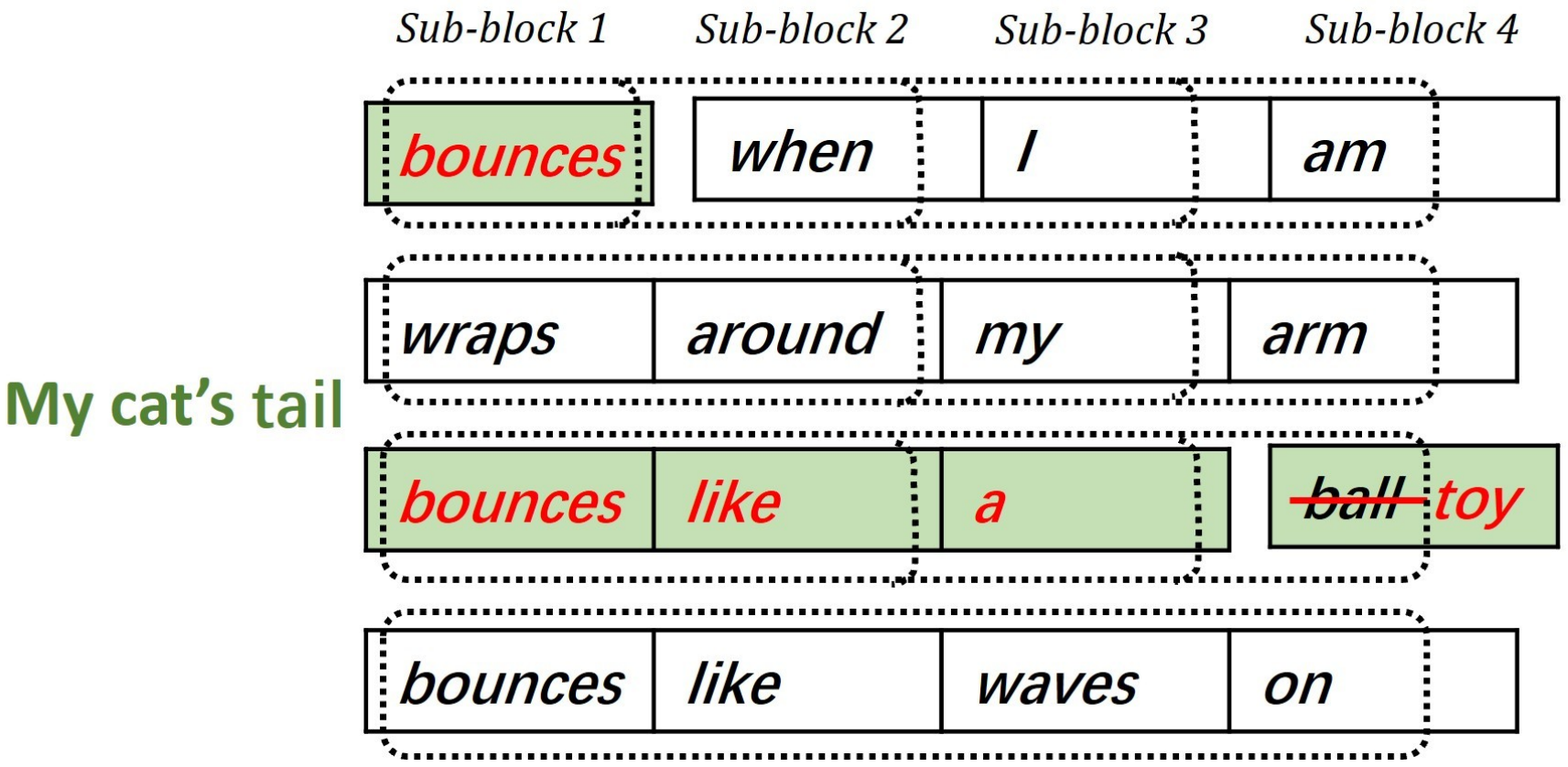}}
        \caption{SpecTr-GBV}
    \label{fig:arch:b}
    \end{subfigure}
    
    \caption{Comparison between SpecTr and SpecTr-GBV. 
    Given multiple i.i.d.\ draft sequences, SpecTr performs position-by-position verification: at each position, it accepts one token (e.g., \textcolor{red}{\textit{bounces}}, then \textcolor{red}{\textit{like}}) and retains only the sequences consistent with accepted tokens. If no token is accepted, a new token is sampled from the residual distribution (e.g., replacing \textcolor{green}{\textit{ball}} with \textcolor{red}{\textit{toy}}). 
    SpecTr-GBV verifies token sub-blocks across all draft sequences. It starts by jointly verifying sub-blocks with the first position (e.g., \textcolor{blue}{\textit{bounces}, \textit{bounces when}, ..., \textit{bounces when I am}}) and select the longest accepted sub-block (e.g., \textcolor{red}{\textit{bounces}}). Verification continues from the next position in the next sequence (e.g., \textcolor{blue}{\textit{wraps around}, ..., \textit{wraps around my arm}}). 
    If no sub-block is accepted, we proceed to the next sequence, progressively selecting the longest accepted sub-block (e.g., \textcolor{red}{\textit{bounces like a}}). A residual token is then sampled after the longest accepted sub-block (e.g., replacing \textcolor{green}{\textit{ball}} with \textcolor{red}{\textit{toy}}).
}
\label{fig:example}
\end{figure*}

\subsection{Greedy Block Verification}
Position-by-position verification terminates when all candidates at a given position are rejected. However, \citet{sun2024optimal, sun2403block} point out that such a verification strategy does not achieve the optimal expected number of accepted tokens within a single iteration. 
To address this, they propose GBV in \emph{the single-draft setting}. 
Unlike standard methods that verify each token independently, GBV evaluates each sub-block $x^i = \{x^{(1)}, \ldots, x^{(i)}\}$ as a whole and accepts it with probability:
\begin{equation*}
\label{eq:alpha_def}
\begin{aligned}
    \alpha(x^i) 
& =\frac{\sum_x \max\left\{ \nu_i q(x \mid x^i) - p(x \mid x^i), \, 0 \right\}}{\sum_x \max\left\{  p(x \mid x^i) -\nu_i q(x \mid x^i), \, 0 \right\} }, \\
    i&=1,\ldots,L-1.
\end{aligned}
\end{equation*}
where $\nu_i =  \nu_{i-1} \frac{q(x^{(i)} \mid x^{i-1})}{p(x^{(i)} \mid x^{i-1})}$, $\nu_0 = 1$, $\alpha(x^L)=\nu_L$. 
Here, $\nu_i$ represents the probability that the sub-block $x^i$ is retained in the final output. 

The final accepted block is the longest accepted sub-block $x^\tau$ from the above process.  
If $\tau = L$, an extra token is drawn from the target model: $x^{(L+1)} \sim \mathcal{M}_b(\cdot \mid c, x^L)$. 
If $\tau < L$, an extra token is sampled from a residual distribution:
\begin{equation*}
    p_{\text{res}}(\cdot | x^\tau) = \text{norm} \left( \max \left\{ \nu_\tau q(\cdot | x^\tau) - p(\cdot | x^\tau), \, 0 \right\} \right),
\end{equation*}
which ensures distributional consistency within a single iteration.
To maintain the same output distribution across multiple SD iterations, the target distribution is modified at all unaccepted positions before the next iteration as follows:
\begin{equation*}
\begin{aligned}
   & q(\cdot \mid x^i) 
= \text{norm} \left( \max \left\{ \nu_i q(\cdot \mid x^i) - p(\cdot \mid x^i), \, 0 \right\} \right), \\
   & \forall \tau+1 \leq i \leq L-1.
\end{aligned}
\end{equation*}
Theoretically, GBV achieves the highest expected number of accepted tokens within a single SD iteration among all valid verification algorithms in the single draft setting.

\section{Methodology}
In this section, we introduce our proposed SpecTr-GBV. We begin by formulating the verification as an optimization problem that aims to maximize the expected acceptance length. We then present the detailed algorithmic procedure. 

\subsection{Maximum Acceptance Length with i.i.d. Draft Sequences}
To further improve acceptance length, SpecTr-GBV combines SpecTr with GBV. 
Given a prefix $c$, we generate $K$ i.i.d. draft sequences of length $L$ from the draft model:
$X^L = \{x_1^L, x_2^L, \ldots, x_K^L\}$. 
We formulate the verification procedure as an OT problem between the multiple draft token blocks $X^L$ and the target token block $y^L$. 

\begin{definition}[Acceptance Length]
For the $K$ i.i.d. draft sequences $X^L$ and a target sequence $y^L$, define the \emph{acceptance length} as
\begin{equation*}
\begin{aligned}
   & \tau = \max \Bigl\{ i \in \{0, \ldots, L\} : \exists\, k \in \{1, \ldots, K\}, \\
   & \phantom{\tau = \max \Bigl\{ } \text{such that } x^{i}_k = y^{i} \Bigr\}.
\end{aligned}
\end{equation*}
\end{definition}
Our goal is to maximize the expected acceptance length, or equivalently minimize the cost function under the physical constraint of i.i.d. draft generation:
\begin{equation}
    \min_{\pi \in \Pi_{\text{CIC}}(p^{\oplus K}, q)} C(\pi) = L -\mathbb{E}_{X^L, y^L \sim \pi} \left[\tau\right], 
\label{otm_spectr_gbv}
\end{equation}
where $\Pi_{\text{CIC}}(p^{\oplus K}, q)$ denotes the set of \emph{Conditionally Independent Couplings} (CIC). Here, $p^{\oplus K}$ represents the product measure of independently-generated draft sequences on the joint draft space. To respect this independence structure during verification, the coupling $\pi$ is subject to the mathematical constraint of conditional independence given the target sequence $y^L$, i.e., $P_\pi(X^L \mid y^L) = \prod_{k=1}^K P_\pi(x_k^L \mid y^L)$. 
This structural constraint is necessary to align the theoretical formulation with the physically realizable verification schemes where drafts are i.i.d. generated. 

However, solving the optimization problem above presents several challenges. 
On the one hand, the joint distributions $q(x^i)$ and $p(x^i)$ for all sub-blocks are not accessible; on the other hand, no closed-form solution exists for the OT problem when $K>1$ and solving it via linear programming incurs prohibitive computational overhead. 
To address these issues, we propose the SpecTr-GBV algorithm. 
Using this algorithm, the expected acceptance length 
$\mathbb{E}\left[\tau\right]$ strictly attains the theoretical value of maximum expected acceptance length
$\max_{\pi \in \Pi_{\text{CIC}}(p^{\oplus K}, q)} \mathbb{E}_{X^L, y^L \sim \pi} \left[\tau\right]$, while circumventing the aforementioned challenges and efficiently solving the OT problem with provable guarantees. 

\subsection{SpecTr-GBV}
In this section, we present the SpecTr-GBV algorithm, which achieves the maximum expected acceptance length stated in \cref{thm:SpecTr-GBV Acceptance Length}. The details of the algorithm are provided in \cref{alg:SpecTR-GBV}, and its illustration is shown in \cref{fig:arch:b}. 
\begin{algorithm}[t]
\caption{SpecTR-GBV}
\label{alg:SpecTR-GBV}
\begin{algorithmic}[1]
\REQUIRE draft sequences $x_{1}^{L}, x_{2}^{L}, \ldots, x_{K}^{L}$; 
the conditional distributions of tokens from the draft and target models
 $p(\cdot \mid x^{i-1}_k)$ for $i = 1, \ldots, L$, 
and $q(\cdot \mid x^{i-1}_k)$ for $i = 1, \ldots, L+1$, 
$\forall k = 1, \ldots, K$.

\ENSURE output tokens $t,y$.

\STATE Initialize draft sequence set $S = \{x_{k}^{L} \mid k = 1, \ldots, K\}$, acceptance length $\tau = 0$, accepted token sub-block $t$, unaccepted token sub-block set $H = \emptyset$, accepted draft sequence index $f= 0$. 

\FOR{each $x_{k}^{L} \in S$}
    \FOR{$i=\tau+1,\ldots,L-1$}
        \STATE Compute $h_{ik}$ as \cref{eq:hik}; sample $\eta \sim U(0,1)$
        \IF{$\eta < h_{ik}$ and $x_k^i \notin H$} 
            \STATE $\tau=i$, $f=k$, $t=x_k^i$ \COMMENT{verify sub-block.}
        \ELSE 
            \STATE $H \gets H \cup \{x_k^i\}$ \COMMENT{record unaccepted sub-block.}
        \ENDIF
    \ENDFOR
    \STATE Compute $h_{Lk}$ as \cref{eq:hLk}; sample $\eta \sim U(0,1)$
    \IF{$\eta < h_{Lk}$ and $x_k^L \notin H$} 
        \STATE $\tau=L$, $t=x_k^L$; sample $y \sim q(\cdot \mid x_k^L)$; \textbf{break} \COMMENT{verify entire block.}
    \ELSE 
        \STATE $H \gets H \cup \{x_k^L\}$ \COMMENT{record unaccepted block.}
    \ENDIF
\ENDFOR
\IF{$\tau<L$} 
    \STATE Sample $y \sim p_{\text{res}}(\cdot \mid x_f^\tau)$ as \cref{eq:pres}.  
\ENDIF
\STATE \textbf{return} $t,y$
\end{algorithmic}
\end{algorithm}
The core equations governing the verification procedure are given by \cref{eq:hik,eq:hLk,eq:pres}. 

\begin{figure*}[t] 
    \begin{tcolorbox}
        \begin{equation}
        h_{ik} = 
        \frac{
            \sum_{x} q(x^{i}, x) 
            \left(1 - \min \left\{ \frac{p(x^{i}, x)}{q(x^{i}, x)},\ 1 \right\} \right)^K - q(x^{i}) 
            \left(1 - \min \left\{ \frac{p(x^{i})}{q(x^{i})},\ 1 \right\} \right)^K}
            {1 - \left(1 - p(x^{i})\right)^K - q(x^{i}) + \sum_{x} q(x^{i}, x) 
            \left(1 - \min \left\{ \frac{p(x^{i}, x)}{q(x^{i}, x)},\ 1 \right\} \right)^K
        }, 
        \label{eq:hik}
        \end{equation}
        \begin{equation}
        h_{Lk}=\frac{q(x^{L}) \left[ 1 - \left(1 - \min\left\{\frac{p(x^{L})}{q(x^{L})},\, 1\right\}\right)^{K} \right]}{1 - \left(1 - p(x^L)\right)^{K}}, 
        \label{eq:hLk}
        \end{equation}
        \begin{equation}
        p_{\text{res}}(x\mid x^{\tau}) = 
        \frac{q(x^\tau, x)\left(1 - \min \left\{\frac{p(x^\tau, x)}{q(x^\tau, x)},1\right\}\right)^{K}}{\sum_{x} q\left(x^\tau, x\right) \left(1 - \min \left\{\frac{p\left(x^\tau, x\right)}{q\left(x^\tau, x\right)}, 1\right\}\right)^{K}}. 
        \label{eq:pres}
        \end{equation}
    \end{tcolorbox}
\end{figure*}

SpecTr-GBV sequentially iterates over the $K$ i.i.d.\ draft sequences. The verification for each sequence $x^{L}_k$ begins at the sub-block $x^{\tau+1}_k$, where $\tau$ denotes the length of the current longest accepted sub-block. Each token sub-block $x^{i}_k$, for $i = \tau+1, \ldots, L$, is accepted with probability $h_{ik}$.
When a token block is rejected, it is recorded in the set $H$. To avoid redundant sampling, if a token sub-block $x^i_k$ scheduled for verification already exists in $H$, the sampling step for that block is skipped, and the algorithm immediately proceeds to verify the next sub-block $x^{i+1}_k$.
After completing verification for all sub-blocks in $x^L_k$, if the entire token block $x^L_k$ is accepted, the sequence-level iteration terminates early; otherwise, SpecTr-GBV proceeds to verify the next draft sequence $x^L_{k+1}$.
The final accepted block is the longest accepted sub-block obtained through this process.  
If the final accepted block equals the entire token block $t^L$, an additional token is sampled directly from the target model: $y \sim \mathcal{M}_b(\cdot \mid c, t^L)$.
If the length of the final accepted block $\tau$ satisfies $\tau < L$, an extra token is sampled from a residual distribution as defined in \cref{eq:pres}.  For each selection step for $x_j^i$, SpecTr-GBV exhibits a complexity of $O(|\Omega|)$, equivalent to GBV and more efficient than the $O(|\Omega| \log(K))$ required by SpecTr.

Similar to GBV, in order to maintain distributional consistency across multiple SD iterations, SpecTr-GBV includes a distribution modification step before the next SD iteration begins. Specifically, the next $L - \tau - 1$ tokens need to be sampled from a modified distribution, denoted as $x^{L - \tau - 1} \sim q_{\text{new}}(\cdot)$, which differs from the original target distribution $q$. 
The computation of this modified distribution is shown in \cref{alg:distribution-modification}.
\begin{algorithm}[t] 
\caption{Distribution Modification}
\label{alg:distribution-modification}
\begin{algorithmic}[1]
\small
\REQUIRE $p$, $q$, $L$, $t^{\tau},y$
\ENSURE $q_{\text{new}}$

\STATE For $i \leq L - \tau - 1$:
\begin{equation*}
\resizebox{0.9\hsize}{!}{%
    $ 
    \begin{aligned}
       & \qquad q_{\text{new}}(x^{(i)}) \\
       & ={} q\left(t^{\tau}, y, x^{i-1}, x^{(i)}\right) 
          \left(1 - \min\left\{ \tfrac{p\left(t^{\tau}, y, x^{i-1}, x^{(i)}\right)}{q\left(t^{\tau}, y, x^{i-1}, x^{(i)}\right)}, 1 \right\}\right)^{K} \\
       &\cdot \Biggl( \sum_{x'} q\left( t^{\tau}, y, x^{i-1},x'\right)
        \left(1 - \min\left\{ \tfrac{p\left(t^{\tau}, y, x^{i-1}, x'\right)}{q\left(t^{\tau}, y, x^{i-1},x'\right)}, 1 \right\}\right)^{K} \Biggr)^{-1}.
    \end{aligned}
    $ 
} 
\end{equation*}

\STATE For $i > L - \tau - 1$:
\[
q_{\text{new}}(x^{(i)})= q(x^{(i)}).
\]

\STATE \textbf{return} $q_{\text{new}}$
\end{algorithmic}
\end{algorithm}

\begin{theorem}[Distribution Preservation]
\label{thm:Distribution Preservation}
SpecTr-GBV preserves the target model distribution. Specifically, for any prefix $c$, given conditional distributions $p$ and $q$ from the draft and target models, draft length $L$, and draft sequences $X^L \sim q^{\oplus K}(\cdot)$, the output $O = (t^\tau, y, x^{L-\tau-1})$ satisfies:
\[
O \sim \text{SpecTr-GBV}(c, p, q, X^L) \implies O \sim q(\cdot),
\]
where $t^\tau$ is the accepted tokens of length $\tau \in [0, L]$, $y$ is the correction token sampled from the residual distribution $p_{\text{res}}(\cdot \mid t^\tau)$ as defined in \cref{eq:pres}, and $x^{L-\tau-1}$ is the remaining sub-block sampled from the modified distribution, i.e., $x^{L-\tau-1} \sim q_{\text{new}}(\cdot)$.
\end{theorem}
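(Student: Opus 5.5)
We prove that the law of the full length-$(L+1)$ output $O=(t^\tau,y,x^{L-\tau-1})$ equals $q(\cdot)$ by splitting on the value of $\tau$ and telescoping. The drafts are generated i.i.d.\ from $p$; we read the $X^L\sim q^{\oplus K}$ in the statement as $p^{\oplus K}$. For any fixed output sequence $z^{L+1}$ we decompose
\[
P(O=z^{L+1})=\sum_{j=0}^{L} P(\tau=j,\ t^{j}=z^{j})\, P\bigl(y=z^{(j+1)}\mid \tau=j, t^j=z^j\bigr)\, q_{\text{new}}\bigl(z^{(j+2)},\ldots,z^{(L+1)}\bigr),
\]
where the middle factor is $p_{\text{res}}$ from \cref{eq:pres} for $j<L$ and $q(\cdot\mid z^L)$ for $j=L$. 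The goal is to show that the sum collapses to $q(z^{L+1})$.

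\textbf{Step 1: the sub-block retention probabilities.} We introduce the key quantity
\[
\mu_i(z^i):=q(z^i)\Bigl(1-\bigl(1-\min\{p(z^i)/q(z^i),1\}\bigr)^K\Bigr),
\]
the probability that the prefix $z^i$ is output and covered by at least one draft. This is the multi-draft analogue of $\nu_i$ in GBV, since $1-(1-\min\{p/q,1\})^K$ is the optimal coverage under conditionally independent couplings. We then prove by induction over the sequential scan of Algorithm~\ref{alg:SpecTR-GBV} (over $k$, and over $i$ within each draft) that $P(\tau\ge i, t^i=z^i)=\mu_i(z^i)$ for every $i\le L$. For this, $P(\text{some draft has prefix } z^i)=1-(1-p(z^i))^K$, and the acceptance probabilities $h_{ik}$, $h_{Lk}$ are exactly the ratios needed. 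For $i=L$, \cref{eq:hLk} gives $\mu_L(z^L)$ directly. For $i<L$, $h_{ik}$ is the conditional probability of accepting $z^i$ given that it appears and that no longer extension was accepted; its numerator $\sum_x \mu$-complement$(z^i,x)-\ldots$ encodes $\mu_i(z^i)-\sum_x\mu_{i+1}(z^i,x)$. The set $H$ ensures that each distinct sub-block is tested at most once, which is what makes the per-sub-block probabilities well-defined despite duplicate drafts.

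\textbf{Step 2: telescoping.} Let $r_i(z^{i+1}):=q(z^{i+1})-\mu_{i+1}(z^{i+1})=q(z^{i+1})(1-\min\{p/q,1\})^K$. The event $\{\tau=j, t^j=z^j\}$ has probability $\mu_j(z^j)-\sum_x\mu_{j+1}(z^j,x)$, and the denominator of \cref{eq:pres} equals $\sum_x r_j(z^j,x)=q(z^j)-\sum_x\mu_{j+1}(z^j,x)$. We verify that $\mu_j$ satisfies the consistency relation making this equal to $P(\tau=j,t^j=z^j)$ up to the $\mu_j$ versus $q(z^j)$ mismatch handled by the outer induction. Multiplying by $p_{\text{res}}$ therefore yields $r_j(z^{j+1})$. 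Next, the construction of $q_{\text{new}}$ in Algorithm~\ref{alg:distribution-modification} chains these residual masses, giving conditionals $r(\cdot)/\sum r(\cdot)$ at each subsequent position. We then show by backward induction on $j$ that
\[
\sum_{j'\ge j}(\text{contributions})=q(z^{L+1})\,\frac{\text{covered mass}}{q(z^j)},
\]
which yields the identity $\sum_j(\mu_j-\mu_{j+1})\cdot(\ldots)=q(z^{L+1})$. The $j=L$ term contributes $\mu_L(z^L)q(z^{(L+1)}\mid z^L)$.

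\textbf{Main obstacle.} The hard part is Step 1: establishing that the sequential, draft-by-draft scan with the duplicate-skipping set $H$ produces exactly $\mu_i$ as the marginal acceptance mass. The difficulty is that whether a draft is visited depends on the acceptance outcomes of earlier drafts. We would handle this by conditioning on the multiset of distinct prefixes present among the $K$ drafts and using exchangeability of the i.i.d.\ drafts. This reduces the problem to showing that the probability at least one copy of $z^i$ is accepted, conditioned on its presence, equals $\mu_i/(1-(1-p(z^i))^K)$. We would verify this first for $L=1$, where it reduces to SpecTr's optimal $K$-draft coupling, and then extend it by induction on $L$, matching the resulting recurrence against the formula for $h_{ik}$.
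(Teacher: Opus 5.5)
Your plan has the paper's skeleton. Step~1 is its appendix lemma $P(X^L\text{ has }x^i,\ \tau^{x^i}\ge i)=\mu_i(x^i)$, and Step~2 corresponds to its forward induction $P(O^i=x^i)=q(x^i)$. However, Step~2 as written rests on a false identity: in general $P(\tau=j,t^j=z^j)\,p_{\text{res}}(z^{(j+1)}\mid z^j)\neq r_j(z^{j+1})$. The mass $\mu_j(z^j)-\sum_x\mu_{j+1}(z^j,x)$ and the normaliser $q(z^j)-\sum_x\mu_{j+1}(z^j,x)$ differ by $r_{j-1}(z^j)=q(z^j)-\mu_j(z^j)$, and the ``outer induction'' you appeal to is never set up. The missing observation, which is how the paper closes the argument, is that two tokens are drawn from the \emph{same} conditional $r_j(z^j,\cdot)/\sum_x r_j(z^j,x)$: the correction token (when $\tau=j$) and the $q_{\text{new}}$ token at position $j+1$ (when $\tau<j$). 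So you should condition on the merged event $\{O^j=z^j,\ \tau\le j\}$. By the forward hypothesis $P(O^j=z^j)=q(z^j)$ and Step~1, its mass is $q(z^j)-\sum_x\mu_{j+1}(z^j,x)$. This cancels the normaliser exactly and yields $P(O^{j+1}=z^{j+1})=\mu_{j+1}(z^{j+1})+r_j(z^{j+1})=q(z^{j+1})$. Your sum over $j$ can also be made to telescope. Writing $c_j$ for its $j$-th summand and $G_j$ for the product of the residual conditionals at positions $j+1,\dots,L$, the correct tail for $j\ge 1$ is $\sum_{j'\ge j}c_{j'}=q(z^{L+1})-r_{j-1}(z^j)\,G_j\,q(z^{(L+1)}\mid z^L)$. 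It is not $q(z^{L+1})$ times a covered-mass ratio, because the $q_{\text{new}}$ chain is not proportional to $q(\cdot\mid z^j)$.

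For Step~1, the paper does not analyse the scan. It takes $h_{Lk}$ to be $P(\tau^{x^L}=L\mid X^L\text{ has }x^L)$, and $h_{ik}$ to be the probability of accepting a present $x^i$ independently of whether a longer extension is accepted. The backward induction is then pure algebra: the numerator of \cref{eq:hik} is $\mu_i-\sum_x\mu_{i+1}$ (as you note), and its denominator is $1-(1-p(x^i))^K-\sum_x\mu_{i+1}$. Your plan to \emph{derive} this property from Algorithm~\ref{alg:SpecTR-GBV} via exchangeability cannot work, and the $L=1$ anchor is wrong. For $L=1$ the algorithm is neither SpecTr's optimal coupling nor \textsc{K-SEQ}. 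It tests the distinct draft tokens in order, each with probability $h$ from \cref{eq:hLk}, and stops at the first acceptance, so a present token is tested only if every earlier distinct token was rejected. Take $K=2$ and $p=q$ uniform on $\{a,b\}$. Then $h(a)=h(b)=\tfrac23$, so $P(\tau=1,t=a)=\tfrac14\bigl(\tfrac23+\tfrac23+\tfrac29\bigr)=\tfrac{7}{18}\neq\mu_1(a)=\tfrac12$. Moreover $P(\tau=0)=\tfrac29$, even though the residual \cref{eq:pres} vanishes identically. So the identity you single out as the main obstacle is false for the literal sequential scan. To finish, you must either adopt it as the defining property of the acceptance step, as the paper implicitly does, or replace the scan by a selection rule that accepts each present distinct sub-block with exactly that conditional probability.
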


By combining \cref{alg:SpecTR-GBV,alg:distribution-modification}, we arrive at the complete procedure, \cref{alg:speculative-decoding}, which functions as a valid SD method. 
Compared to standard SD methods, \cref{alg:speculative-decoding} additionally incorporates a distribution modification step to $\mathcal{M}_{b}$ before the next SD iteration begins. 
The proofs of \cref{alg:SpecTR-GBV,alg:distribution-modification,alg:speculative-decoding}, and \cref{thm:Distribution Preservation} are in Appendix \ref{app:Proof of Distribution Preservation}. 

\begin{algorithm}[t]
\caption{Speculative Decoding with SpecTr-GBV}
\label{alg:speculative-decoding}
\begin{algorithmic}[1]
\small
\REQUIRE prefix $c$, target model distribution $q$, draft model distribution $p$, draft length $L$
\ENSURE decoded sequence

\WHILE{End of Sequence $\notin (t^{\tau}, y)$}
    
    \STATE Sample $x_{1}^L, \ldots, x_{K}^L \sim p(\cdot)$ i.i.d. \COMMENT{obtain draft block.}
    
    \STATE Compute $q(\cdot \mid  x_{k}^{i-1})$ for $i = 1, \ldots, L+1$ ,$k = 1, \ldots, K$ in parallel. \COMMENT{parallel scoring.}
    
    \STATE $(t^{\tau}, y) \leftarrow \text{VERIFY}\left(X^{L}, \{q(\cdot \mid x^{i-1}_k)\}, \{p(\cdot \mid  x^{i-1}_k)\}\right).$ \COMMENT{draft verification.}
    
    \STATE $c \leftarrow c, t^{\tau}, y.$ 
    
    \STATE $q \leftarrow \text{DistributionModification}(q, p, L, t^{\tau}, y).$ 
    
\ENDWHILE
\end{algorithmic}
\end{algorithm}
\section{Theoretical Analysis}

In this section, we present the theoretical guarantees of SpecTr-GBV. 
First, we establish the upper bound on the expected acceptance length and characterize its associated properties. 
 
\begin{lemma}[Upper Bound on Expected Acceptance Length]
\label{lemma:upper_bound}
For any coupling $\pi \in \Pi_{\text{CIC}}(p^{\oplus K}, q)$, the expected acceptance length is upper bounded by: 
\begin{equation}
\label{eq:upper_bound}
\begin{aligned}
   & \mathbb{E}_{X^L, y^L \sim \pi} \left[\tau\right] \\
\leq{}& \sum_{\tau=1}^{L} \sum_{x^{\tau} } q(x^{\tau}) 
\left[ 1 - \left( 1 - \min \left\{ \tfrac{p(x^{\tau})}{q(x^{\tau})},\, 1 \right\} \right)^{K} \right].
\end{aligned}
\end{equation}
\end{lemma}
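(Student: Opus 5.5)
Since $\tau\in\{0,\dots,L\}$, I will use the tail-sum identity
\[
\mathbb{E}[\tau]=\sum_{i=1}^{L}\Pr_\pi(\tau\ge i)=\sum_{i=1}^{L}\Pr_\pi\bigl(\exists k:\ x_k^i=y^i\bigr),
\]
where the second equality uses the fact that a matching prefix of length $\tau$ contains matching prefixes of every shorter length. It therefore suffices to show, for each fixed $i$ and each fixed sub-block $x^i$, that
\[
\Pr_\pi\bigl(y^i=x^i,\ \exists k:\ x_k^i=x^i\bigr)\le q(x^i)\Bigl[1-\bigl(1-\min\{p(x^i)/q(x^i),1\}\bigr)^K\Bigr],
\]
and then sum over $x^i$ and over $i$. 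The events for distinct $x^i$ are disjoint because they are indexed by the value of $y^i$.

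For fixed $x^i$, I condition on $y^L$. By the CIC constraint, the drafts are conditionally independent given $y^L$. Writing $a_k(y^L)=P_\pi(x_k^i=x^i\mid y^L)$, the target probability equals
\[
\sum_{y^L:\,y^i=x^i}q(y^L)\Bigl[1-\prod_{k=1}^K\bigl(1-a_k(y^L)\bigr)\Bigr].
\]
The marginal constraint on each draft gives $\sum_{y^L}q(y^L)a_k(y^L)=p(x^i)$. Hence the conditional average $\bar a_k$ over the event $\{y^i=x^i\}$ (mass $q(x^i)$) satisfies $q(x^i)\bar a_k\le p(x^i)$, and trivially $\bar a_k\le1$, so $\bar a_k\le m:=\min\{p(x^i)/q(x^i),1\}$.

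The key step is a concavity/Jensen argument. The function $(a_1,\dots,a_K)\mapsto 1-\prod_k(1-a_k)$ is not jointly concave, so Jensen cannot be applied directly. I would handle this in two moves. First, $1-\prod_k(1-a_k)\le 1-\prod_k(1-\tilde a_k)$ fails to simplify directly, so instead I use AM--GM to get $\prod_k(1-a_k)\ge\bigl(1-\tfrac1K\sum_k a_k\bigr)^K$. Second, $s\mapsto 1-(1-s)^K$ is concave and increasing on $[0,1]$, so Jensen over $y^L$ (weighted by $q(y^L)/q(x^i)$ on $\{y^i=x^i\}$) bounds the average by $1-(1-\tfrac1K\sum_k\bar a_k)^K\le 1-(1-m)^K$ by monotonicity. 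Multiplying by $q(x^i)$ yields the per-block bound.

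The step I expect to be most delicate is the symmetrization via AM--GM followed by Jensen, since one must verify that it is legitimate when the $a_k$ differ across $k$. The main obstacle is recognizing that conditional independence is exactly what converts the union event into the product form $1-\prod(1-a_k)$; without CIC, the union probability could reach $\min\{Kp,q\}$. Blocks with $q(x^i)=0$ contribute zero and can be dropped. Summing over $x^i$ and $i=1,\dots,L$ gives \cref{eq:upper_bound}.
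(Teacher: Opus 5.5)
Your argument fails at the step you flagged, and the inequality goes the wrong way. With $b_k=1-a_k\in[0,1]$, AM--GM gives $\prod_k b_k\le\bigl(\tfrac1K\sum_k b_k\bigr)^K$, i.e.\ $1-\prod_k(1-a_k)\ge 1-\bigl(1-\tfrac1K\sum_k a_k\bigr)^K$. Spreading the $a_k$ unevenly \emph{increases} the union probability, so symmetrizing gives a lower bound, not the upper bound you need.

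The step cannot be patched. Under the stated CIC hypothesis, the per-block inequality you aim for can fail for $i<L$. This holds even if you impose the full constraint $p^{\oplus K}$ (joint independence of the drafts), not just the per-draft marginals you use. Take $K=L=2$, $p(ae)=\tfrac14$, $p(be)=\tfrac34$, $q(ac)=q(ad)=\tfrac14$, $q(bc)=\tfrac12$. Given $y^2$, draw the two drafts independently. Each has second token $e$, and first token $a$ with respective probabilities $(u_1,u_2)=(1,\tfrac14)$ if $y^2=ac$, $(0,\tfrac34)$ if $y^2=ad$, and $(0,0)$ if $y^2=bc$; otherwise the first token is $b$. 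Each draft has first token $a$ with probability $\tfrac14$, and both do with probability $\tfrac1{16}$. So the drafts are i.i.d.\ $p$ and the coupling lies in $\Pi_{\text{CIC}}(p^{\oplus 2},q)$. Since $p$ and $q$ share no length-$2$ block, $\mathbb{E}[\tau]=P(\tau\ge1)=\tfrac14\cdot1+\tfrac14\cdot\tfrac34+\tfrac12\cdot1=\tfrac{15}{16}$. The right-hand side of the lemma is only $\tfrac12\bigl(1-\tfrac14\bigr)+\tfrac12+0=\tfrac78$. The extension $y_2$ acts as shared randomness that anti-correlates the two drafts' first-token matches on $\{y_1=a\}$. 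The union probability there is $\tfrac78$ instead of $1-(1-\tfrac12)^2=\tfrac34$, and this is exactly what your AM--GM step would need to rule out.

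The paper's proof has the same gap in a less visible place. Its equality $P(X^L\text{ has }x^\tau\mid Y^\tau=x^\tau)=1-\bigl(1-P(X_1^\tau=x^\tau\mid Y^\tau=x^\tau)\bigr)^K$ treats the drafts as i.i.d.\ conditionally on the prefix $Y^\tau$. CIC only gives independence given all of $y^L$, with factors that may differ across $k$. In the example both conditional probabilities equal $\tfrac12$, yet the left side is $\tfrac78$.

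Your choice to condition on $y^L$ is the more careful one, and it shows what extra hypothesis is needed. Suppose the drafts are conditionally i.i.d.\ given $y^L$, with a common $a(y^L)$. Then your Jensen step for the concave map $s\mapsto1-(1-s)^K$, together with $\bar a\le\min\{p(x^i)/q(x^i),1\}$ and monotonicity, proves the bound with no AM--GM at all. The same hypothesis turns the paper's equality into the inequality that suffices.

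Without such a symmetry assumption, only the level-$L$ terms are safe, along with the case $K=1$ where the map is linear. At level $L$, conditioning on $y^L$ fixes the block, so no averaging over extensions occurs. There $a_k\le\min\{p(x^L)/q(x^L),1\}$ plus monotonicity in each $a_k$ already gives the per-block bound.
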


\begin{remark}\label{remark:upper_bound}
The upper bound on the expected acceptance length in \cref{eq:upper_bound} has following properties: 
\begin{itemize}
    \item \textbf{Monotonicity}: Let $\mathrm{Bound}(K)$ denote the upper bound value in \cref{eq:upper_bound} for $K$ draft sequences. For any positive integers $K_1$ and $K_2$ satisfying $K_1 > K_2$, the bound strictly increases: $\mathrm{Bound}(K_1) > \mathrm{Bound}(K_2)$. 
    \item \textbf{Convergence:} The upper bound converges to the maximum acceptance length $L$ as the number of drafts $K$ tends to infinity: $\lim_{K \to \infty} \mathrm{Bound}(K) = L$. 
    \item \textbf{Consistency:} When $K = 1$, the bound reduces to the single-draft setting, which corresponds exactly to GBV: $\mathrm{Bound}(1)  = \sum_{\tau=1}^{L} \sum_{x^{\tau}} \min\left\{ p(x^{\tau}),q(x^{\tau}) \right\}$. 
\end{itemize}
\end{remark}

Lemma \ref{lemma:upper_bound} and Remark \ref{remark:upper_bound} characterize the intrinsic performance limits determined by the marginal distributions and show how multiple i.i.d.\ drafts strictly improve the upper bound on the expected acceptance length as $K$ increases. This result provides the theoretical foundation for SpecTr-GBV. As stated in the following theorem, the design of SpecTr-GBV ensures that it achieves the maximum expected acceptance length established in Lemma \ref{lemma:upper_bound}. 

\begin{theorem}[SpecTr-GBV Acceptance Length]
\label{thm:SpecTr-GBV Acceptance Length}
The SpecTr-GBV achieves the upper bound of the expected acceptance length: 
\begin{equation}
\label{eq:optimal_acceptance}
\begin{aligned}
   & \mathbb{E}_{X^L, y^L \sim \pi^*} \left[\tau\right] \\
={}& \sum_{\tau=1}^{L} \sum_{x^{\tau} } q(x^{\tau}) 
     \left[ 1 - \left( 1 - \min \left\{ \tfrac{p(x^{\tau})}{q(x^{\tau})},\, 1 \right\} \right)^{K} \right],
\end{aligned}
\end{equation}
where $\pi^*$ denotes the optimal conditionally independent coupling between $p^{\oplus K}(X^L)$ and $q(y^L)$ that minimizes the cost function in \cref{otm_spectr_gbv}. 
\end{theorem}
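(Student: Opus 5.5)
The plan is to compute the expected acceptance length of the coupling $\pi^*$ induced by \cref{alg:SpecTR-GBV} (with \cref{alg:distribution-modification}), and show that it equals the right-hand side of \cref{eq:upper_bound}. Together with \cref{lemma:upper_bound}, this gives both the equality and the optimality of $\pi^*$ within $\Pi_{\text{CIC}}$. We first rewrite the acceptance length as a sum of indicators, $\tau=\sum_{i=1}^{L}\mathbb{I}(\tau\ge i)$. Since $\tau\ge i$ exactly when some $x_k^i=y^i$, and prefixes of matching sequences also match, we get
\[
\mathbb{E}[\tau]=\sum_{i=1}^{L}\sum_{x^i}\Pr_{\pi^*}\bigl(y^i=x^i,\ \exists k:\ x_k^i=x^i\bigr).
\]
It therefore suffices to show, for every $i$ and every prefix $x^i$,
\[
\Pr_{\pi^*}\bigl(y^i=x^i,\ \exists k: x_k^i=x^i\bigr)=q(x^i)\bigl[1-(1-\min\{p(x^i)/q(x^i),1\})^K\bigr].
\]
We abbreviate this quantity as $q(x^i)-q(x^i)(1-r(x^i))^K$, where $r=\min\{p/q,1\}$. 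This per-prefix identity is also exactly where the bound in \cref{lemma:upper_bound} is tight: by conditional independence, $\Pr(\text{no }x_k^i=x^i\mid y^i=x^i)=\prod_k(1-P(x_k^i=x^i\mid y^i=x^i))$, and each factor is at least $1-r(x^i)$.

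The second step is to identify the marginal law of the output prefix under the algorithm. By \cref{thm:Distribution Preservation}, the output has law $q$, so $\Pr(y^i=x^i)=q(x^i)$. It remains to show $\Pr(y^i=x^i,\ \text{no draft has prefix }x^i)=q(x^i)(1-r(x^i))^K$. We will prove this by induction on $i$, tracking the \emph{unmatched mass} $m(x^i):=\Pr(\text{output prefix }=x^i,\ x^i\text{ not among the drafts' prefixes})$. There are two ways an unmatched output prefix $x^i$ can arise:
\begin{itemize}
\item[(a)] the accepted block is some $t^\tau$ with $\tau<i$ that is a prefix of $x^i$, and the remaining tokens come from $p_{\text{res}}$ in \cref{eq:pres} followed by $q_{\text{new}}$;
\item[(b)] a longer matched prefix exists, which is impossible when $x^i$ itself is unmatched.
\end{itemize}
The factors $q(x^\tau,x)(1-r(x^\tau,x))^K$ in \cref{eq:pres} and \cref{alg:distribution-modification} are designed so that the residual mass flowing into $x^i$ is proportional to $q(x^i)(1-r(x^i))^K$. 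The denominator of $h_{ik}$ in \cref{eq:hik} contains exactly the normalizers $\sum_x q(x^i,x)(1-r(x^i,x))^K$ and $q(x^i)(1-r(x^i))^K$. Substituting \cref{eq:hik,eq:hLk} into the recursion should make the sum telescope, giving $m(x^i)=q(x^i)(1-r(x^i))^K$. At the last level, \cref{eq:hLk} gives this directly: given that at least one draft equals $x^L$ (probability $1-(1-p(x^L))^K$), acceptance occurs with conditional probability $h_{Lk}$, so the matched mass is $q(x^L)[1-(1-r)^K]$.

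The main obstacle is this inductive bookkeeping. The algorithm scans the drafts sequentially, skips sub-blocks already recorded in $H$, and may accept different drafts sharing a common prefix. We must show that the probability of accepting a given distinct prefix $x^i$ depends only on whether it appears among the drafts and not on its multiplicity or order. The first thing to try is to condition on the set of distinct prefixes present among the drafts. The $H$ mechanism means each distinct $x^i$ is tested once with probability $h_{ik}$, so multiplicity enters only through $\Pr(\exists k: x_k^i=x^i)=1-(1-p(x^i))^K$, which matches the $1-(1-p(x^i))^K$ appearing in the denominator of \cref{eq:hik}. We would then verify the identity one level at a time, mirroring the single-draft GBV argument of \citet{sun2403block} with $\min\{p,q\}$ replaced by $q[1-(1-r)^K]$. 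Summing the per-prefix identity over $x^i$ and $i$ then yields \cref{eq:optimal_acceptance}.
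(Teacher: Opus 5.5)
You reduce the theorem to the coincidence event $\{y^i=x^i,\ \exists k:\ x_k^i=x^i\}$. You then evaluate the unmatched mass $m(x^i)$ as ``the residual mass flowing into $x^i$''. Your case split shows only one inclusion: an unmatched output prefix must have come through $p_{\text{res}}$ and $q_{\text{new}}$. The converse fails. \cref{eq:pres} and \cref{alg:distribution-modification} depend only on $p$, $q$ and the accepted prefix, not on which tokens the drafts contain, so a resampled token can reproduce a draft prefix that was tested and rejected.

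This already happens for $L=1$ and $K\ge 2$. Suppose all drafts equal some $x$ with $0<p(x)<q(x)<1$. The single test accepts with probability $h_{Lk}<1$, because the map $q\mapsto q[1-(1-p/q)^K]$ is strictly increasing on $[p,1]$ and equals $1-(1-p)^K$ at $q=1$. Meanwhile $p_{\text{res}}(x)\propto q(x)(1-p(x)/q(x))^K>0$. So with positive probability $x$ is rejected and then output anyway.

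Hence the residual mass $R(x^i):=P(O^i=x^i,\ \tau<i)$, with $\tau$ the algorithm's accepted length, only upper-bounds $m(x^i)$. Even an exact computation $R(x^i)=q(x^i)(1-r(x^i))^K$ would not give your per-prefix identity, since the coincidence event carries strictly more mass than the event the $h_{ik}$ are calibrated for. The paper never uses the coincidence event. It works with $A(x^i):=P\bigl(X^L\text{ has }x^i,\ \tau^{x^i}\ge i\bigr)$, meaning $x^i$ is among the drafts and the algorithm's accepted block extends it, and sums that.

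Second, ``substituting \cref{eq:hik,eq:hLk} should make the sum telescope'' is where all the content lies, and your forward recursion does not close by itself. Let $Z(x^{i-1})=\sum_x q(x^{i-1},x)(1-r(x^{i-1},x))^K$. At position $i$, both $p_{\text{res}}$ and $q_{\text{new}}$ equal $q(x^i)(1-r(x^i))^K/Z(x^{i-1})$. Therefore $R(x^i)=P(O^{i-1}=x^{i-1},\ \tau\le i-1)\,q(x^i)(1-r(x^i))^K/Z(x^{i-1})$. Using \cref{thm:Distribution Preservation}, the target $R(x^i)=q(x^i)(1-r(x^i))^K$ amounts to $\sum_x A(x^{i-1},x)=q(x^{i-1})-Z(x^{i-1})$. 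That is exactly the level-$i$ accepted-mass identity (summed over the last token) you are trying to prove.

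The only independent input is the acceptance rule itself. A present $x^{i-1}$, none of whose extensions ends up accepted, must become the final block with probability exactly $h_{(i-1)k}$, i.e.\ $A(x^{i-1})-\sum_x A(x^{i-1},x)=\bigl[1-(1-p(x^{i-1}))^K-\sum_x A(x^{i-1},x)\bigr]h_{(i-1)k}$. The paper takes this as the defining property of $h_{ik}$. You flag it (the $H$ bookkeeping) as the main obstacle but do not supply it. It is not automatic: each scan starts at the current $\tau+1$ and the loop breaks on a full acceptance, so whether a sub-block is ever tested depends on earlier drafts' outcomes.

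Granting this property, the natural induction runs backward from $i=L$, with your $h_{Lk}$ computation as the base case. Substituting \cref{eq:hik} then yields $A(x^i)=q(x^i)\bigl[1-(1-r(x^i))^K\bigr]$, which is \cref{lem:Acceptance ratio of draft sequence}. After that, $\mathbb{E}[\tau]=\sum_i\sum_{x^i}A(x^i)$ follows directly, with no need for \cref{thm:Distribution Preservation}. Invoking that theorem is in fact circular within the paper's development, because its proof relies on the same lemma.
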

We prove Lemma \ref{lemma:upper_bound} and \cref{thm:SpecTr-GBV Acceptance Length} in Appendix \ref{app:proof of SpecTr-GBV Acceptance Length}. 

\begin{table*}[t]
\centering
\caption{Performance comparison of SpecTr-GBV with baselines. The results demonstrate that SpecTr-GBV consistently outperforms all baselines across five datasets, regardless of whether \texttt{DeepSeek-33B} or \texttt{DeepSeek-6.7B} is used as the target model. 
}
\label{tab:performance}
\resizebox{0.8\textwidth}{!}{
\begin{tabular}{cccccccc}
\toprule
\textbf{Setting} & \textbf{Dataset} & \textbf{Metric} & \textbf{AR} & \textbf{SD} & \textbf{SpecTr} & \textbf{GBV} & \textbf{SpecTr-GBV} \\
\midrule
\multirow{12}{*}{\shortstack{DeepSeek-33B \\ DeepSeek-1.3B \\ L=12 \\ T=0.4 \\ K=3}} 
& \multirow{2}{*}{HumanEval} & BE & $1$ & $9.53 \pm 1.53$ & $10.25 \pm 1.37$ & $9.60 \pm 1.52$ & $\mathbf{10.45 \pm 1.44}$ \\
& & SR & $1$ & $2.02 \pm 0.32$ & $2.21 \pm 0.30$ & $2.03 \pm 0.32$ & $\mathbf{2.50 \pm 0.39}$ \\
& \multirow{2}{*}{GSM8K} & BE & $1$ & $6.83 \pm 1.66$ & $7.25 \pm 1.31$ & $6.89 \pm 1.44$ & $\mathbf{7.65 \pm 1.54}$ \\
& & SR & $1$ & $1.57 \pm 0.42$ & $2.26 \pm 0.44$ & $1.57 \pm 0.37$ & $\mathbf{2.46 \pm 0.58}$ \\
& \multirow{2}{*}{MGSM} & BE & $1$ & $7.18 \pm 1.85$ & $8.10 \pm 1.86$ & $7.37 \pm 1.88$ & $\mathbf{8.31 \pm 1.81}$ \\
& & SR & $1$ & $1.55 \pm 0.45$ & $1.97 \pm 0.72$ & $1.60 \pm 0.47$ & $\mathbf{2.12 \pm 0.76}$ \\
& \multirow{2}{*}{LM1B} & BE & $1$ & $8.17 \pm 2.76$ & $8.93 \pm 2.45$ & $8.42 \pm 2.66$ & $\mathbf{8.94 \pm 2.51}$ \\
& & SR & $1$ & $1.71 \pm 0.57$ & $1.82 \pm 0.47$ & $1.75 \pm 0.54$ & $\mathbf{1.90 \pm 0.55}$ \\
& \multirow{2}{*}{Alpaca} & BE & $1$ & $6.51 \pm 2.28$ & $7.42 \pm 2.18$ & $6.89 \pm 2.27$ & $\mathbf{7.58 \pm 2.22}$ \\
& & SR & $1$ & $1.36 \pm 0.47$ & $1.52 \pm 0.42$ & $1.41 \pm 0.54$ & $\mathbf{1.60 \pm 0.48}$ \\
\rowcolor{yellow!50}
& & BE & $1$ & $7.64$ & $8.39$ & $7.83$ & $\mathbf{8.59}$ \\
\rowcolor{yellow!50}
& \multirow{-2}{*}{Average}
& SR & $1$ & $1.64$ & $1.96$ & $1.67$ & $\mathbf{2.12}$ \\

\midrule
\multirow{12}{*}{\shortstack{DeepSeek-6.7B \\ DeepSeek-1.3B \\ L=8 \\ T=0.4 \\ K=3}} 
& \multirow{2}{*}{HumanEval} & BE & $1$ & $7.06 \pm 0.91$ & $7.53 \pm 0.81$ & $7.06 \pm 0.92$ & $\mathbf{7.70 \pm 0.72}$ \\
& & SR & $1$ & $1.19 \pm 0.15$ & $1.15 \pm 0.18$ & $1.19 \pm 0.15$ & $\mathbf{1.28 \pm 0.12}$ \\
& \multirow{2}{*}{GSM8K} & BE & $1$ & $5.78 \pm 0.55$ & $6.55 \pm 0.58$ & $5.99 \pm 0.65$ & $\mathbf{6.68 \pm 0.72}$ \\
& & SR & $1$ & $1.02 \pm 0.10$ & $1.28 \pm 0.11$ & $1.04 \pm 0.12$ & $\mathbf{1.42 \pm 0.17}$ \\
& \multirow{2}{*}{MGSM} & BE & $1$ & $5.84 \pm 1.00$ & $6.38 \pm 1.01$ & $5.99 \pm 1.01$ & $\mathbf{6.60 \pm 1.11}$ \\
& & SR & $1$ & $1.01 \pm 0.18$ & $1.06 \pm 0.26$ & $1.02 \pm 0.20$ & $\mathbf{1.14 \pm 0.31}$ \\
& \multirow{2}{*}{LM1B} & BE & $1$ & $6.17 \pm 1.78$ & $6.86 \pm 1.66$ & $6.42 \pm 1.73$ & $\mathbf{7.01 \pm 1.66}$ \\
& & SR & $1$ & $1.02 \pm 0.29$ & $1.04 \pm 0.22$ & $1.03 \pm 0.27$ & $\mathbf{1.12 \pm 0.27}$ \\
& \multirow{2}{*}{Alpaca} & BE & $1$ & $5.81 \pm 1.28$ & $6.12 \pm 1.18$ & $5.89 \pm 1.27$ & $\mathbf{6.23 \pm 0.22}$ \\
& & SR & $1$ & $1.01 \pm 0.17$ & $1.00 \pm 0.42$ & $1.00 \pm 0.14$ & $\mathbf{1.03 \pm 0.18}$ \\
\rowcolor{yellow!50}
& & BE & $1$ & $6.13$ & $6.69$ & $6.27$ & $\mathbf{6.84}$ \\
\rowcolor{yellow!50}& \multirow{-2}{*}{Average}
& SR & $1$ & $1.05$ & $1.11$ & $1.06$ & $\mathbf{1.20}$ \\
\bottomrule
\end{tabular}}
\end{table*}

\section{Experiments}

In this section, we evaluate SpecTr-GBV against four baselines across five benchmark datasets with different combinations of draft and target models. In addition, we perform extensive ablation studies on the draft length $L$, the draft number $K$, and the temperature $T$. 

\subsection{Baselines and Datasets}
To rigorously evaluate the efficacy and robustness of SpecTr-GBV, we compare SpecTr-GBV with (1) \textbf{Autoregressive Decoding (AR)}, (2) \textbf{Speculative Decoding (SD)}~\citep{leviathan2023fast,chen2023accelerating}, (3) \textbf{SpecTr}~\citep{sun2023spectr}, (4) \textbf{Greedy Block Verification (GBV)}~\citep{sun2403block}, to conduct ablation-style comparisons.
For datasets, we evaluate on five diverse tasks: (1) python programming problems (\textbf{HumanEval})~\citep{chen2021evaluating}, (2) grade school math problems (\textbf{GSM8K})~\citep{cobbe2021training}, (3) multilingual grade school math problems (\textbf{MGSM})~\citep{shi2022language}, (4) language modeling with One-Billion Word Benchmark (\textbf{LM1B})~\citep{chelba2013one}, (5) Stanford instruction-following dataset (\textbf{Alpaca})~\citep{alpaca}. This diverse selection of tasks and strong baselines allows for a comprehensive evaluation of SpecTr-GBV's performance.

\subsection{Metrics}
We report comparison results across two key metrics for all methods.
(1) \textbf{Block Efficiency (BE)}~\citep{leviathan2023fast}: the average number of decoded tokens per serial call, defined as:
$
    \text{BE} = \frac{\text{Total number of decoded tokens}}{\text{Number of serial calls to } \mathcal{M}_b}.
$
(2) \textbf{Speedup Ratio (SR)}~\citep{leviathan2023fast}: the ratio of the wall-clock time of baseline AR to that of the proposed method: $ \text{SR} = T_{\text{autoregressive}}/T_{\text{proposed}}$. 
These metrics capture both algorithmic efficiency and practical speedup.

\subsection{Main Results}
\label{sec:main-results}
We report results on three major LLM families: \texttt{DeepSeek} \citep{guo2024deepseek}, \texttt{CodeLlama} \citep{roziere2023code}, and \texttt{Vicuna} \citep{chiang2023vicuna}. Results for \texttt{DeepSeek} are shown in \cref{tab:performance}, while those for \texttt{CodeLlama} and \texttt{Vicuna} are provided in  Appendix \ref{app:Additional Experiments}. 
In \cref{tab:performance}, we report the mean and standard deviation of each metric with different random seeds on $1,000$ test prompts and across multiple datasets under two settings: draft length $L=12$, temperature $T=0.4$, draft number $K=3$, \texttt{DeepSeek-33B} as the target, \texttt{DeepSeek-1.3B} as the draft, and $L=8$, $T=0.4$, $K=3$, \texttt{DeepSeek-6.7B} as the target, \texttt{DeepSeek-1.3B} as the draft. 
In the \texttt{33B-1.3B} case, SpecTr-GBV achieves a $12.4\%$ improvement in average BE and a $29.3\%$ gain in average SR compared with SD, a $2.3\%$ improvement in average BE and an $8.2\%$ gain in average SR compared with SpecTr, a $9.7\%$ improvement in average BE and a $27.0\%$ gain in average SR compared with GBV respectively. 
In the \texttt{6.7B-1.3B} case, SpecTr-GBV achieves an $11.6\%$ improvement in average BE and a $14.3\%$ gain in average SR compared with SD, a $2.2\%$ improvement in average BE and an $8.1\%$ gain in average SR compared with SpecTr, a $9.1\%$ improvement in average BE and a $13.2\%$ gain in average SR compared with GBV respectively. 

The SR is smaller than the BE, which aligns with our expectations. This discrepancy primarily arises from two factors:
First, the draft model also requires non-negligible computation time to generate tokens, which cannot be ignored in practice. 
Second, during verification, the target model executes batch inference, which leads to slightly higher latency for individual requests compared to single-batch inference.

\subsection{Time Efficiency Analysis: SpecTr-GBV vs. SpecTr}
\begin{table*}[t]
\centering
\caption{Wall-clock time breakdown of SpecTr-GBV and SpecTr.We report the total wall-clock time (Total), number of decoding iterations for SD (Iter), time spent by the draft model (Draft) and target model (Target), verification algorithm overhead (Verification), other system overheads (e.g., cache operations) (Other), throughput (Tokens/s), and mean acceptance rate (Accept) for every data point. The $\delta$ row quantifies the percentage change of SpecTr-GBV relative to the SpecTr baseline.}

\label{tab:model_comparison}
\resizebox{\textwidth}{!}{
\begin{tabular}{@{}llcccccccccc@{}}
\toprule
Setting & Dataset & Method & Total (s) & Iter & Draft (s) & Target (s) & Verification (s) & Other (s) & Tokens/s & Accept \\
\midrule
\multirow{6}{*}{\shortstack{\\   \\ DeepSeek-33B \\   \\ DeepSeek-1.3B\\   \\$L$=12} }
& \multirow{3}{*}{GSM8K} & SpecTr & 19.24 & 73.46 & 11.97 & 6.05 & 1.01 & 0.20 & 27.48 & 0.521 \\
& & SpecTr-GBV & 18.03 & 70.13 & 11.57 & 5.79 & 0.47 & 0.20 & 29.64 & 0.555 \\
& & $\delta$ & -6.3\% & -4.5\% & -3.3\% & -4.3\% & -53.5\% & -0.0\% & +7.9\% & +6.5\% \\
\cmidrule{2-11}
& \multirow{3}{*}{HumanEval} & SpecTr & 12.91 & 51.72 & 8.91 & 2.83 & 1.03 & 0.12 & 40.99 & 0.772 \\
& & SpecTr-GBV & 11.39 & 50.92 & 8.24 & 2.75 & 0.26 & 0.12 & 46.79 & 0.788 \\
& & $\delta$ & -11.8\% & -1.5\% & -7.5\% & -2.8\% & -74.8\% & -0.0\% & +14.2\% & +2.1\% \\
\midrule
\multirow{6}{*}{\shortstack{ \\ \\DeepSeek-6.7B \\  \\DeepSeek-1.3B\\$L$=8 }} & \multirow{3}{*}{GSM8K} & SpecTr & 12.36 & 79.27 & 8.71 & 2.44 & 1.00 & 0.19 & 42.01 & 0.695 \\
& & SpecTr-GBV & 11.12 & 78.07 & 8.20 & 2.39 & 0.35 & 0.19 & 46.93 & 0.711 \\
& & $\delta$ & -10.0\% & -1.5\% & -5.9\% & -2.0\% & -65.0\% & -0.0\% & +11.7\% & +2.3\% \\
\cmidrule{2-11}
& \multirow{3}{*}{HumanEval} & SpecTr & 10.09 & 69.71 & 7.39 & 1.56 & 1.01 & 0.12 & 51.85 & 0.816 \\
& & SpecTr-GBV & 9.05 & 67.65 & 7.17 & 1.52 & 0.25 & 0.12 & 57.67 & 0.838 \\
& & $\delta$ & -10.3\% & -3.0\% & -3.0\% & -2.6\% & -75.2\% & -0.0\% & +11.2\% & +2.7\% \\
\bottomrule
\end{tabular}
}
\end{table*}

We present a wall-clock time breakdown comparing SpecTr-GBV and SpecTr under the \texttt{DeepSeek-33B-1.3B} and \texttt{DeepSeek-6.7B-1.3B} settings with T = 0.4, K = 3, on the HumanEval and GSM8K datasets. The results in \cref{tab:model_comparison}  demonstrate that 
the number of decoding iterations decreases by $2.6\%$ on average. Furthermore, although the time cost of the draft model per iteration theoretically increases, the reduced number of iterations, resulting from a higher acceptance rate, causes the time spent by the draft model and target model to decrease by an average of $4.9\%$ and $2.9\%$ respectively. This demonstrates that SpecTr-GBV achieves a higher acceptance rate, which results in fewer iterations and reduced overall model processing time.

\begin{figure*}[t]
    \centering
    \begin{subfigure}[t]{0.48\linewidth}
        \centering
        \begin{minipage}{0.48\linewidth}
            \centering
            \includegraphics[width=\linewidth]{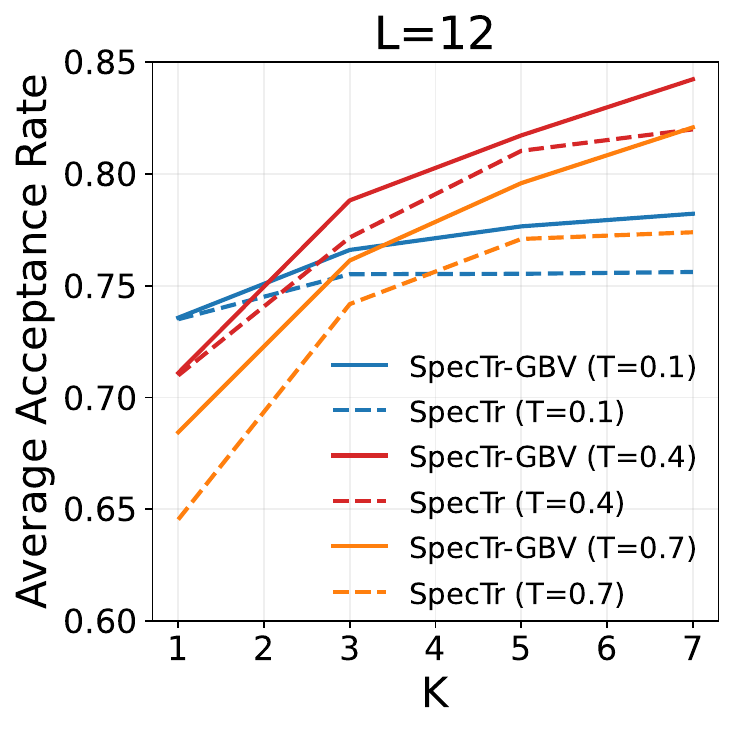}
        \end{minipage}
        \hfill
        \begin{minipage}{0.48\linewidth}
            \centering
            \includegraphics[width=\linewidth]{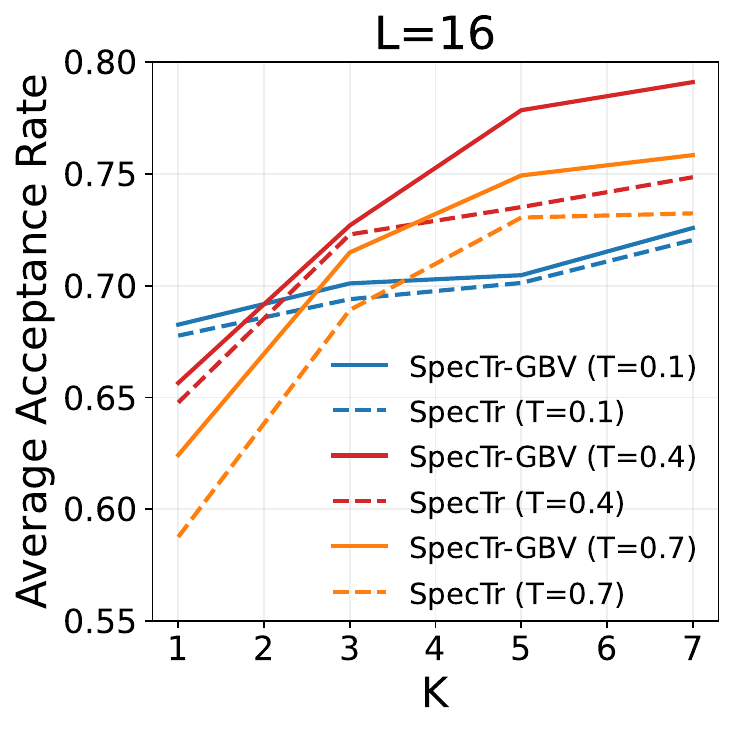}
        \end{minipage}
        \caption{Acceptance rate comparison between SpecTr-GBV and SpecTr with varying $K$ ($L = 12$ and $16$).}
        \label{fig:acceptance_rates}
    \end{subfigure}
    \hfill
    \begin{subfigure}[t]{0.5\linewidth}
        \centering
        \begin{minipage}{0.46\linewidth}
            \centering
            \includegraphics[width=\linewidth]{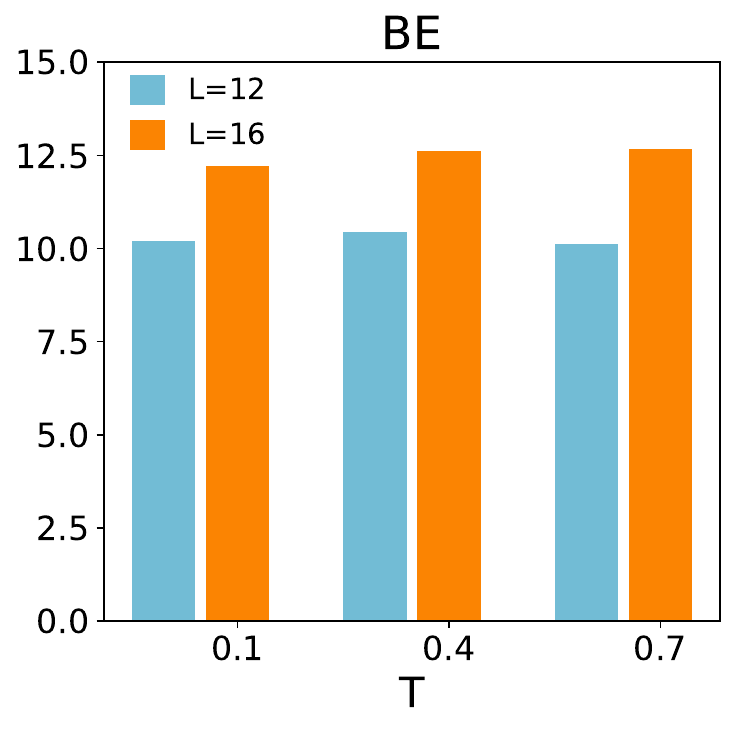}
        \end{minipage}
        \hfill
        \begin{minipage}{0.46\linewidth}
            \centering
            \includegraphics[width=\linewidth]{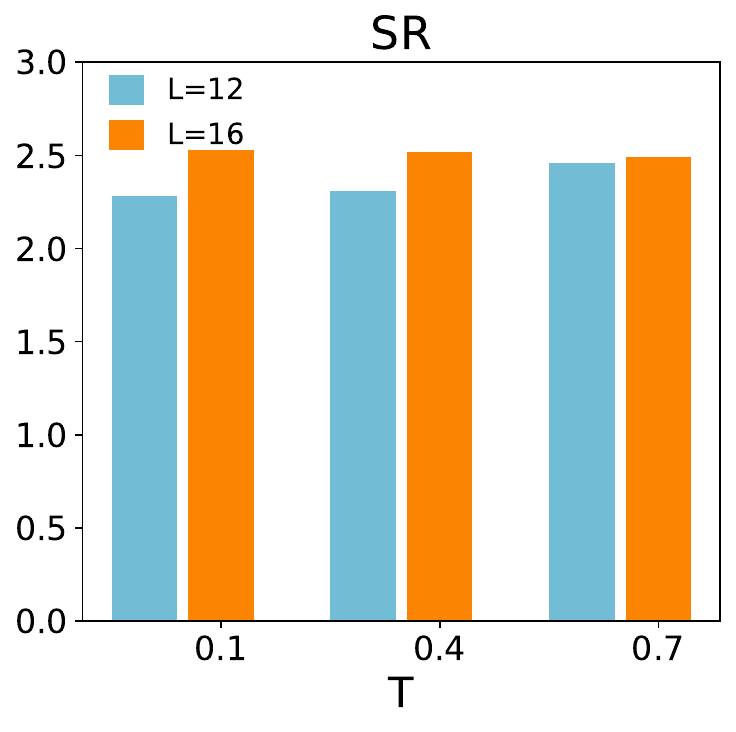}
        \end{minipage}
        \caption{BE and SR performance of SpecTr-GBV with varying temperature $T$ ($L = 12$ and $16$).}
        \label{fig:BE_SR}
    \end{subfigure}
    \caption{Ablation results of SpecTr-GBV under different draft number (a) and temperature (b).}
    \label{fig:draft_size}
\end{figure*}

Notably, as previously stated, SpecTr using K-SEQ requires a $O(|\Omega| \log(K))$ binary search for the $\rho$ parameter, while SpecTr-GBV computes acceptance probabilities directly at $O(|\Omega|)$. This theoretical efficiency is confirmed in practice as well, as the verification overhead for SpecTr-GBV is reduced by over $50\%$ compared to SpecTr in both settings.

\subsection{Ablation Studies}
\label{sec:Ablation Studies}
We analyze the sensitivity of draft length $L$, draft number $K$, and temperature $T$ in the \texttt{33B-1.3B} setting. The experiments are conducted on the HumanEval dataset with five different random seeds. 
The results under the \texttt{6.7B-1.3B} setting are shown in Appendix \ref{app:Additional Experiments}. 

\textbf{(1) Effect of draft length $L$:} We compare SpecTr-GBV against baselines under varying draft lengths $L = 12, 16, 20, 24$ with temperature $T = 0.4$ and draft number $K = 3$. As shown in \cref{tab:temp_comparison}, SpecTr-GBV consistently outperforms all baselines across different settings. More importantly, as $L$ increases, BE steadily improves, while SR first increases and then declines. This is because longer draft lengths incur higher computational overhead during the draft phase, which eventually outweighs the benefits of reduced target model calls. 

\textbf{(2) Effect of draft number $K$:} We compare the acceptance rates of SpecTr-GBV and SpecTr under different draft sequences $K = 1, 3, 5, 7$ with draft lengths $L = 12$ and $16$. As shown in \cref{fig:acceptance_rates}, and consistent with our theoretical analysis, the acceptance rates of both methods increase as $K$ grows. Moreover, SpecTr-GBV consistently achieves higher acceptance rates than SpecTr across all settings. 

\textbf{(3) Effect of temperature $T$:} We evaluate BE and SR of SpecTr-GBV under $T = 0.1, 0.4, 0.7$ with draft lengths $L = 12$ and $16$. As shown in \cref{fig:BE_SR}, both metrics exhibit minimal variation across temperatures, demonstrating the robustness of SpecTr-GBV to changes in temperature. 

\begin{table}[t]
\centering
\caption{Ablation results of SpecTr-GBV under different draft lengths $L$.}
\label{tab:temp_comparison}
\resizebox{\linewidth}{!}{
\begin{tabular}{@{}ccccccccccc@{}} 
\toprule
\multirow{2}{*}{$L$} & 
\multicolumn{2}{c}{AR} & 
\multicolumn{2}{c}{SD} & 
\multicolumn{2}{c}{SpecTr} & 
\multicolumn{2}{c}{GBV} & 
\multicolumn{2}{c}{SpecTr-GBV} \\
\cmidrule(r){2-3} \cmidrule(r){4-5} \cmidrule(r){6-7} \cmidrule(r){8-9} \cmidrule(r){10-11}
 & BE & SR & BE & SR & BE & SR & BE & SR & BE & SR \\
\midrule
\multirow{1}{*}{12} 
 & 1 & 1 & 9.53  & 2.02 & 10.25 & 2.21 & 9.60  & 2.03 & \textbf{10.45} & \textbf{2.50} \\
\midrule
\multirow{1}{*}{16} 
& 1 & 1 & 11.36 & 1.93 & 12.56 & 2.56 & 11.37 & 1.94 & \textbf{12.63} & \textbf{2.52} \\
\midrule
\multirow{1}{*}{20} 
 & 1 & 1 & 13.10  & 1.88 & 13.64 & 2.55 & 13.19  & 1.86 & \textbf{14.58} & \textbf{2.87} \\
\midrule
\multirow{1}{*}{24} 
 & 1 & 1 & 14.63  & 1.80 & 16.22 & 2.33 & 15.08  & 1.85 & \textbf{16.52} & \textbf{2.68} \\
\bottomrule
\end{tabular}}
\end{table}

\section{Conclusions}
In this work, we propose SpecTr-GBV, a novel speculative decoding framework that unifies multi-draft generation with greedy block verification. Rather than being a straightforward combination of isolated optimization strategies, SpecTr-GBV represents a carefully designed extension of block verification tailored to the complexities of the multi-draft setting. By formulating the verification as an optimal transport problem between draft and target token blocks, our method enables more effective token acceptance within each iteration.Theoretically, we show that SpecTr-GBV achieves the optimal expected number of accepted tokens achievable for i.i.d. draft generation for a fixed number of draft sequences, and this optimal bound increases with more drafts. Empirically, extensive experiments across five datasets and multiple baselines demonstrate that SpecTr-GBV significantly improves both block efficiency and decoding speed while preserving output distribution fidelity, demonstrating consistent effectiveness across diverse model architectures.

\section*{Impact Statement}
This paper presents work whose goal is to advance the field of Machine
Learning. There are many potential societal consequences of our work, none
which we feel must be specifically highlighted here.

\section*{Acknowledgments}
This work was supported by the NSFC Project (No.62576346), the MOE Project of Key Research Institute of Humanities and Social Sciences (22JJD110001), the fundamental research funds for the central universities, and the research funds of Renmin University of China (24XNKJ13), and Beijing Advanced Innovation Center for Future Blockchain and Privacy Computing.

\bibliography{main}
\bibliographystyle{icml2026}

\newpage

\appendix
\onecolumn

\section{Proof of Theorem 4.2}
\label{app:Proof of Distribution Preservation}

We begin with the following lemma, which characterizes the probability that SpecTr-GBV accepts each sub-block.

\begin{lemma}[Probability of Accepting Sub-block]
\label{lem:Acceptance ratio of draft sequence}
For all $i \in [1,L]$ and $X^L$, in SpecTr-GBV, we have
\[
P\big(X^L \ \text{has}\ x^i,\, \tau^{x^i} \geq i )
= q(x^i) 
\left[ 1 - \left( 1 - \min \left\{ \frac{p(x^i)}{q(x^i)}, 1 \right\} \right)^{K} \right].
\]
where $\tau^{x^i}$ denotes the acceptance length of token block $x^i$.
\end{lemma}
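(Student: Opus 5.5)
I would prove the claim by induction on the sub-block length $i$, showing that the event ``some draft in $X^L$ has prefix $x^i$ and the final acceptance length satisfies $\tau\ge i$ with the accepted prefix equal to $x^i$'' has probability
\[
g(x^i) := q(x^i)\Bigl[1-\bigl(1-\min\{p(x^i)/q(x^i),1\}\bigr)^K\Bigr] = q(x^i) - q(x^i)\bigl(1-\min\{p(x^i)/q(x^i),1\}\bigr)^K .
\]
Two facts about i.i.d.\ drafts drive the argument. First, the probability that at least one of the $K$ drafts begins with $x^i$ is $1-(1-p(x^i))^K$, which is the quantity in the denominators of \cref{eq:hik,eq:hLk}. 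Second, $g$ is monotone along prefixes, $g(x^i)\ge \sum_x g(x^i,x)$; this follows because $u\mapsto q\,[1-(1-\min\{p/q,1\})^K]$ is, after rescaling, concave and subadditive in the pair $(p,q)$. This inequality is exactly what makes the numerator of \cref{eq:hik} nonnegative. I would also check $h_{ik}\le 1$, which amounts to $g(x^i)\le 1-(1-p(x^i))^K$.

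The inductive step reads $P(\tau\ge i, x^i) = P(\tau\ge i+1 \text{ through some extension } (x^i,x)) + P(\text{$x^i$ is the last accepted prefix})$. By the induction hypothesis for length $i+1$, the first term equals $\sum_x g(x^i,x)$. For the second term I would write it as $P(\text{draft with prefix } x^i \text{ exists and } x^i \text{ is accepted at its first visit, not through a longer block})$ and evaluate it with $h_{ik}$. The key observation is that the set $H$ ensures each distinct sub-block $x^i$ receives exactly one acceptance coin. A block $x^i$ is ``accepted at level $i$'' with probability $h_{ik}$ only when it was not already skipped. Combining these, the numerator of \cref{eq:hik} is $\sum_x q(x^i,x)(1-\min)^K - q(x^i)(1-\min)^K = g(x^i) - \sum_x g(x^i,x)$. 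So $P(\text{stop at } x^i) = g(x^i)-\sum_x g(x^i,x)$, and adding the first term gives $g(x^i)$.

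Here the denominator of \cref{eq:hik} must be read as the conditional probability that $x^i$ is still ``live'' when it is examined. That is, a draft has prefix $x^i$, and no longer extension has already determined the outcome: $1-(1-p(x^i))^K - \sum_x g(x^i,x)$. Again by the hypothesis, $q(x^i)-\sum_x q(x^i,x)(1-\min)^K = \sum_x g(x^i,x)$, which matches the denominator as written. The base case $i=L$ is immediate from \cref{eq:hLk}: $P(\exists k: x_k^L=x^L)\cdot h_{Lk} = g(x^L)$.

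The main obstacle is justifying this ``live'' conditioning rigorously. Algorithm~\ref{alg:SpecTR-GBV} scans drafts sequentially, and a draft sharing prefix $x^i$ may be examined only from position $\tau+1$ onward. One must show that the combined effect of the sequential scan, the $H$-skipping, and the early \textbf{break} makes the acceptance event for $x^i$ depend on the drafts only through whether the prefix $x^i$ appears and on the outcomes in the subtree below $x^i$. I would first try to establish this by viewing the drafts as a random prefix tree, so that the outcomes in the subtree of $x^i$ are exchangeable across the $K$ drafts. The scan order then does not affect the marginal probability of each tree node, and the recursion above closes.
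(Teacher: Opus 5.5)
Your outline is the paper's proof. It uses backward induction from $i=L$ and splits $P(\tau\ge i,\ t^i=x^i)$ into $\sum_x g(x^i,x)$ (by the induction hypothesis) plus the probability of stopping at $x^i$. It then identifies the numerator and denominator of \cref{eq:hik} with $g(x^i)-\sum_x g(x^i,x)$ and $1-(1-p(x^i))^K-\sum_x g(x^i,x)$. That algebra is correct, and so are your side facts: $g(x^i)\ge\sum_x g(x^i,x)$ holds (it is \emph{super}additivity, from concavity plus degree-one homogeneity of $(p,q)\mapsto q[1-(1-\min\{p/q,1\})^K]$), and $h_{ik}\le 1$. The trouble is the step you flag as the main obstacle. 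The paper does not justify it either; it simply writes $P(\tau^{x^L}=L\mid X^L\text{ has }x^L)=h_{Lk}$ and $P(\text{has }x^{i_0-1},\,\tau=i_0-1)=P(\text{has }x^{i_0-1},\,\tau<i_0)\,h$. For \cref{alg:SpecTR-GBV} as written this step is false, so no exchangeability argument can supply it.

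Whether a block's coin is ever tossed, and whether its acceptance survives, depends on events outside its subtree:
\begin{itemize}
\item the \textbf{break} after an earlier draft's full block is accepted kills all later coins;
\item a later draft is examined only at positions beyond the current $\tau$, which may have been set by an unrelated prefix;
\item a later draft with a different prefix can overwrite $t$ with a longer accepted block.
\end{itemize}
The base case already fails. Take $L=1$, $K=2$, and $p=q$ uniform on $\{a,b\}$. Then $h(a)=h(b)=\tfrac{1/2}{3/4}=\tfrac23$, so $P(\exists k:\,x_k=a)\,h(a)=\tfrac34\cdot\tfrac23=\tfrac12=g(a)$. But over the draws $(a,a),(a,b),(b,a),(b,b)$ the algorithm gives $P(t=a)=\tfrac14\bigl(\tfrac23+\tfrac23+\tfrac13\cdot\tfrac23+0\bigr)=\tfrac{7}{18}$. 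On $(b,a)$ the coin for $a$ is reached only after $b$ is rejected, and on $(a,a)$ the second copy is blocked by $H$. Thus $P(\tau\ge 1)=\tfrac79<1=\sum_x g(x)$, and on $\{\tau=0\}$ the residual \cref{eq:pres} is $0/0$.

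The missing ingredient is therefore not a finer conditioning argument but a different verification rule, one that allocates acceptance jointly over the distinct blocks present in $X^L$. For $L=1$ such a rule exists, because $\sum_{x\in A}g(x)\le 1-(1-p(A))^K$ for every set $A$; this is a Hall-type condition, and your check $h\le 1$ is its singleton case. Independent first-success coins with the fixed probabilities $h_{ik}$ do not realize it.
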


\begin{proof}

We prove this by induction in the backward direction. When $i = L$, this holds due to the maximal coupling step for the entire block $x^L$ in \cref{alg:SpecTR-GBV} since

\begin{align*}
P\big(X^L \ \text{has}\ x^L,\, \tau^{x^L} \geq L\big)
&= P\big(X^L \ \text{has}\ x^L\big) \,
   P\left(\tau^{x^L} = L \,\middle|\, X^L \ \text{has}\ x^L\right) \\
&= \left[ 1 - \left(1 - p(x^L)\right)^{K} \right]
   \cdot \frac{q(x^{L}) \left[ 1 - \left(1 - \min\!\left\{\tfrac{p(x^{L})}{q(x^{L})},\, 1\right\}\right)^{K} \right]}{1 - \left(1 - p(x^L)\right)^{K}} \\
&= q(x^{L}) \left[ 1 - \left(1 - \min \left\{\tfrac{p(x^{L})}{q(x^{L})},\, 1\right\}\right)^{K} \right].
\end{align*}

where $P\left(\tau^{x^L} = L \,\middle|\, X^L \ \text{has}\ x^L\right)$ denotes $h_{Lk}$ in \cref{eq:hLk}.

Suppose the equation holds for $i \geq i_{0}$. For $i = i_{0} - 1$, we have
\begin{align*}
P\big(X^L \ \text{has}\ x^{i_{0}-1},\ \tau^{x^{i_{0}-1}} \geq i_{0} - 1 \big) 
&= P\big(X^L \ \text{has}\ x^{i_{0}-1},\ \tau^{x^{i_{0}-1}} \geq i_{0} \big) \\
&\quad + P\big(X^L \ \text{has}\ x^{i_{0}-1},\ \tau^{x^{i_{0}-1}} = i_{0} - 1 \big).
\end{align*}

For the first term, by the induction assumption:
\begin{align}
P\big(X^L \ \text{has}\ x^{i_{0}-1},\ \tau^{x^{i_{0}-1}} \geq i_{0} \big) 
&= \sum_{x} P\big(X^L \ \text{has}\ (x^{i_{0}-1},x),\ \tau^{(x^{i_{0}-1},x)} \geq i_{0} \big) \nonumber \\
&= \sum_{x} q(x^{i_{0}-1}, x) 
\left[ 
1 - \left( 
1 - \min\left\{
\tfrac{p\left(x^{i_{0}-1}, x\right)}{q\left(x^{i_{0}-1}, x\right)},1
\right\}
\right)^{K} 
\right].
\label{eq:first term}
\end{align}

For the second term, define the acceptance probability:
\[
h \;\triangleq\; 
P\big( accept \ x^{i_{0}-1}\,\big|\, X^L \text{ has } x^{i_{0}-1} \big).
\]

Then:

\begin{align}
P\big(X^L \ \text{has}\ x^{i_0-1},\tau^{x^{i_0-1}} = i_0-1) 
&=P\big(X^L \ \text{has}\ x^{i_0-1},\ \tau < i_0\big)P\big(  accept \ x^{i_{0}-1}\,\big|\, X^L \text{ has } x^{i_{0}-1} \big) \nonumber\\
&= \left( P\big(X^L \ \text{has}\ x^{i_0-1}\big) 
- P\big(X^L \ \text{has}\ x^{i_0-1},\ \tau \geq i_0\big) \right)h .
\label{eq:h_def_app}
\end{align}

The marginal probability is:
\begin{equation}
P\big(X^L \ \text{has}\ x^{i_0-1}\big) 
= 1 - \big(1 - p\big(x^{i_0-1}\big)\big)^K .
\label{eq:marginal_app}
\end{equation}

Substituting \cref{eq:marginal_app} into \cref{eq:h_def_app}:
\begin{align}
P\big(X^L \ \text{has}\ x^{i_0-1},\ \tau^{x^{i_0-1}} = i_0-1\big) 
&= \Bigg(
   \Big[1 - \big(1 - p(x^{i_0-1})\big)^K\Big]- \sum_{x} q(x^{i_0-1}, x)\cdot \nonumber \\
&\quad 
   \left[ 1 - \left( 1 - \min \left\{ 
      \tfrac{p(x^{i_0-1}, x)}{q(x^{i_0-1}, x)}, 1 
   \right\} \right)^K \right] 
   \Bigg)h .
\label{eq:second term}
\end{align}

Combining terms \cref{eq:first term,eq:second term} yields:
\begin{align}
&P\big(X^L \,\text{has}\, x^{i_0-1},\, \tau^{x^{i_0-1}} \ge i_0-1\big) \nonumber \\
&= P\big(X^L \,\text{has}\, x^{i_0-1},\, \tau^{x^{i_0-1}} \ge i_0 \big) 
 + P\big(X^L \,\text{has}\, x^{i_0-1},\, \tau^{x^{i_0-1}} = i_0 - 1 \big) \nonumber \\ 
&= \sum_{x} q(x^{i_0-1}, x) 
   \Bigl[ 1 - \bigl( 1 - \min\!\bigl\{ 
      \tfrac{p(x^{i_0-1}, x)}
            {q(x^{i_0-1}, x)}, 1 
   \bigr\} \bigr)^K \Bigr]+  \nonumber \\
&\quad \Bigl(
   \bigl[1 - (1 - p(x^{i_0-1}))^K \bigr] 
   - \sum_{x} q(x^{i_0-1}, x)
     \Bigl[ 1 - \bigl( 1 - \min\!\bigl\{ 
        \tfrac{p(x^{i_0-1}, x)}
              {q(x^{i_0-1}, x)},\, 1 
     \bigr\} \bigr)^K \Bigr] 
   \Bigr) \cdot h .
\label{eq:total_prob_app}
\end{align}

As in \cref{eq:hik}, $h_{(i_0-1)k}$ equals to
\begin{equation*}
h = 
\frac{
    \sum_{x} q(x^{i_0-1}, x) 
    \left(1 - \min \left\{ \frac{p(x^{i_0-1}, x)}{q(x^{i_0-1}, x)},\ 1 \right\} \right)^K- q(x^{i_0-1}) 
    \left(1 - \min \left\{ \frac{p(x^{i_0-1})}{q(x^{i_0-1})},\ 1 \right\} \right)^K}
    {1 - \left(1 - p(x^{i_0-1})\right)^K - q(x^{i_0-1}) + \sum_{x} q(x^{i_0-1}, x) 
    \left(1 - \min \left\{ \frac{p(x^{i_0-1}, x)}{q(x^{i_0-1}, x)},\ 1 \right\} \right)^K
},
\end{equation*}

By the verification mechanism, substituting $h$ into \cref{eq:total_prob_app} yields:
\begin{equation}
\label{eq:compact_app}
P\big(X^L \ \text{has}\ x^{i_0-1},\, \tau^{x^{i_0-1}} \geq i_0-1 \big)
= q(x^{i_0-1}) 
\left[ 1 - \left( 1 - \min \left\{ \frac{p(x^{i_0-1})}{q(x^{i_0-1})}, 1 \right\} \right)^{K} \right],
\end{equation}
which completes the proof.
\end{proof}

Here, we continue to prove the \cref{thm:Distribution Preservation} which demonstrates that complete procedure of SpecTr-GBV, as described in \cref{alg:speculative-decoding}, preserves the distribution consistency of target model distribution. That is, for $\forall i\leq L$ and sub-block $x^{i} $, we have $P\big(O^{i} = x^{i}\big) = q(x^{i})$. 

\begin{proof}
We prove the claim by induction. Note that the corollary $P\big(O^{i} = x^{i}\big) = p(x^{i})$ holds for $i = 0$, which is the trivial case and both sides are equal to $1$. Suppose the claim holds for $i \leq i_{0}$. This means that for any sub-block $\forall\, x^{i_{0}}$, we have

\begin{equation*}
P\big(O^{i_{0}} = x^{i_{0}}\big) = q(x^{i_{0}}).
\end{equation*}

When $i = i_{0} + 1$, by the algorithm, we have that either $\tau \geq i_{0} + 1$, where $O^{(i_{0}+1)}$ is an accepted token, or $\tau \leq i_{0}$, where $O^{(i_{0}+1)}$ is sampled according to $q_{\text{new}}(\cdot \mid O^{i_{0}})$. We have

\begin{align*}
&P\big(O^{i_{0}+1} = x^{i_{0}+1}\big) \\
&= P\big(O^{i_{0}+1} = x^{i_{0}+1},\ \tau \geq i_{0} + 1\big) 
 + P\big(O^{i_{0}+1} = x^{i_{0}+1},\ \tau \leq i_{0}\big) \\
&= P\big(O^{i_{0}+1} = x^{i_{0}+1},\ \tau \geq i_{0} + 1\big) 
 + P\big(O^{i_{0}} = x^{i_{0}},\ \tau \leq i_{0}\big)\cdot p_{\text{res}}\!\big(x^{(i_{0}+1)}\big) \\
&= P\big(O^{i_{0}+1} = x^{i_{0}+1},\ \tau \geq i_{0} + 1\big) \\
&\quad + \Big(P\big(O^{i_{0}} = x^{i_{0}}\big) - P\big(O^{i_{0}} = x^{i_{0}},\ \tau \geq i_{0} + 1\big)\Big) 
   \cdot q_{\text{new}}\!\big(x^{(i_{0}+1)} \mid x^{i_{0}}\big) \\
&= q(x^{i_{0}+1})\!\left(1-\left(1-\min \left\{\tfrac{p(x^{i_{0}+1})}{q(x^{i_{0}+1})},1\right\}\right)^{K}\right) \\
&\quad + \Big(q(x^{i_{0}}) - \sum_{x} q(x^{i_{0}},x)\!\left(1-\left(1-\min \left\{\tfrac{p(x^{i_{0}},x)}{q(x^{i_{0}},x)},1\right\}\right)^{K}\right)\Big)
   \cdot q_{\text{new}}\!\big(x^{(i_{0}+1)} \mid x^{i_{0}}\big) \\
&= q(x^{i_{0}+1}),
\end{align*}

where $q_{\text{new}}(x^{(i_{0}+1)} \mid x^{i_{0}})$ is defined in \cref{alg:distribution-modification}.

This completes the induction. Therefore, for all $i \le L$, we have
\[
P\big(O^{i} = x^{i}\big) = q(x^{i}).
\]
\end{proof}

\section{Proof of Theorem 5.3}
\label{app:proof of SpecTr-GBV Acceptance Length}
First, we prove Lemma \ref{lemma:upper_bound}.
For all couplings $\pi \in \Pi_{\text{CIC}}(p^{\otimes k}, q)$, let $(X^L, y^L) \sim \pi$. 
We further define a draft sequence $X_1^L$ extracted from $X^L$, and denote the accepted sequence as $Y$. Then, we have

\begin{align}
\mathbb{E}[\tau] 
&= \sum_{i=1}^{L} P\big(\tau \ge i\big)  \nonumber \\
&= \sum_{\tau=1}^{L} \sum_{x^{\tau}} P\big(Y^{\tau} = x^{\tau},\, X^{L} \,\text{has}\, x^{\tau} \big) \nonumber \\
&= \sum_{\tau=1}^{L} \sum_{x^{\tau}} q(x^\tau) \, P\big(X^{L} \,\text{has}\, x^{\tau} \;\big|\; Y^{\tau} = x^{\tau}\big) \nonumber \\
&= \sum_{\tau=1}^{L} \sum_{x^{\tau}} q(x^{\tau}) 
   \Bigl[ 1 - \bigl( 1 - P\big(X_1^\tau = x^{\tau} \;\big|\; Y^{\tau} = x^{\tau}\big) \bigr)^{K} \Bigr] \nonumber \\
&\le \sum_{\tau=1}^{L} \sum_{x^{\tau}} q(x^{\tau}) 
   \Bigl[ 1 - \bigl( 1 - \min\Bigl\{ \frac{p(x^{\tau})}{q(x^{\tau})},\, 1 \Bigr\} \bigr)^{K} \Bigr].
\label{eq:beta_expectation}
\end{align}

In the last step of \cref{eq:beta_expectation}, we explicitly use the upper bound derived from the optimal transport principle:
\[
P\big(X_1^\tau\ =\ x^{\tau} \;\big|\; Y^{\tau}= x^{\tau}\big) \le \min \left\{ \frac{p(x^{\tau})}{q(x^{\tau})},\, 1 \right\}.
\]
which completes the proof.

Applying Lemma~\ref{lem:Acceptance ratio of draft sequence} completes the proof of Theorem~\ref{thm:SpecTr-GBV Acceptance Length}. In SpecTr-GBV, we have:
\begin{align*}
\mathbb{E}[\tau] 
&=  \sum_{i=1}^{L}P\big(\tau \geq i\big)  \nonumber \\
&= \sum_{i=1}^{L} \sum_{x^i } P\big(X^{L}  \ \text{has}\  x^i,\, \tau^{x^i} \ge i \big) \nonumber \\
&= \sum_{\tau=1}^{L} \sum_{x^{\tau} } q(x^{\tau}) 
\left[ 1 - \left( 1 - \min \left\{ \frac{p(x^{\tau})}{q(x^{\tau})},\, 1 \right\} \right)^{K} \right].
\label{eq:tau_expectation}
\end{align*}

\section{Additional Experimental Results}
\label{app:Additional Experiments}

In this section, we report the mean and standard deviation of each metric over 1,000 test prompts (draft length $L = 8$, temperature $T = 0.4$, draft number $K = 3$) across five datasets, using \texttt{CodeLlama-13B} with \texttt{CodeLlama-7B} as the draft model, and \texttt{Vicuna-13B} with \texttt{Vicuna-7B} as the draft model, as shown in \cref{tab:performance_codellama}. 
For both the \texttt{CodeLlama} and \texttt{Vicuna} models, we observe trends consistent with those on \texttt{DeepSeek}. In the \texttt{CodeLlama 13B-7B} setting, SpecTr-GBV outperforms SD, SpecTr, and GBV, achieving up to 8.0\%, 0.9\%, and 6.1\% higher average BE, and 15.4\%, 5.2\%, and 17.4\% higher average SR, respectively.  
In the \texttt{Vicuna 13B-7B} setting, SpecTr-GBV also outperforms all baselines, yielding up to 12.2\%, 2.4\%, and 8.9\% higher average BE, along with 21.5\%, 9.2\%, and 21.5\% higher average SR. 

We also present ablation results using \texttt{DeepSeek-6.7B} as the target model and \texttt{DeepSeek-1.3B} as the draft model. Experiments are conducted on the HumanEval dataset with five random seeds, analyzing the effects of three key hyperparameters—$L$, $K$, and $T$—in the \texttt{6.7B-1.3B} setting.

\begin{table}[t]
\centering
\caption{Performance comparison of SpecTr-GBV with baselines using \texttt{CodeLlama} and \texttt{Vicuna} models. The results show that SpecTr-GBV consistently outperforms all baselines across five datasets with both model families.} 
\label{tab:performance_codellama}
\resizebox{0.8\linewidth}{!}{
\begin{tabular}{cccccccc}
\toprule
\textbf{Setting} & \textbf{Dataset} & \textbf{Metric} & \textbf{AR} & \textbf{SD} & \textbf{SpecTr} & \textbf{GBV} & \textbf{SpecTr-GBV} \\
\midrule
\multirow{12}{*}{\shortstack{CodeLlama-13B \\ CodeLlama-7B \\ L=8 \\ T=0.4 \\ K=3}} 
& \multirow{2}{*}{HumanEval}  & BE & $1$ & $7.58 \pm 0.76$ & $7.99 \pm 0.70$ & $7.59 \pm 0.79$ & $\mathbf{8.02 \pm 0.73}$ \\
& & SR & $1$ & $1.45 \pm 0.15$ & $1.67 \pm 0.17$ & $1.42 \pm 0.15$ & $\mathbf{1.67 \pm 0.18}$ \\
& \multirow{2}{*}{GSM8K} & BE & $1$ & $7.33 \pm 0.69$ & $7.89 \pm 0.51$ & $7.52 \pm 0.61$ & $\mathbf{7.95 \pm 0.53}$ \\
& & SR & $1$ & $1.18 \pm 0.12$ & $1.48 \pm 0.24$ & $1.18 \pm 0.11$ & $\mathbf{1.51 \pm 0.22}$ \\
& \multirow{2}{*}{MGSM} & BE & $1$ & $6.83 \pm 1.11$ & $7.36 \pm 0.95$ & $6.89 \pm 1.16$ & $\mathbf{7.39 \pm 0.96}$ \\
& & SR & $1$ & $1.28 \pm 0.21$ & $1.39 \pm 0.18$ & $1.26 \pm 0.21$ & $\mathbf{1.51 \pm 0.21}$ \\
& \multirow{2}{*}{LM1B} & BE & $1$ & $7.27 \pm 1.06$ & $7.81 \pm 0.81$ & $7.52 \pm 0.95$ & $\mathbf{7.88 \pm 0.82}$ \\
& & SR & $1$ & $1.15 \pm 0.19$ & $1.17 \pm 0.18$ & $1.16 \pm 0.16$ & $\mathbf{1.28 \pm 0.23}$ \\
& \multirow{2}{*}{Alpaca} & BE & $1$ & $7.08 \pm 0.94$ & $7.56 \pm 0.84$ & $7.20 \pm 0.93$ & $\mathbf{7.70 \pm 0.77}$ \\
& & SR & $1$ & $1.07 \pm 0.14$ & $1.06 \pm 0.11$ & $1.05 \pm 0.13$ & $\mathbf{1.14 \pm 0.13}$ \\
\rowcolor{yellow!50}
& & BE & $1$ & $7.21$ & $7.72$ & $7.34$ & $\mathbf{7.79}$ \\
\rowcolor{yellow!50}
& \multirow{-2}{*}{Average}
& SR & $1$ & $1.23$ & $1.35$ & $1.21$ & $\mathbf{1.42}$ \\
\midrule
\multirow{12}{*}{\shortstack{ Vicuna-13B \\  Vicuna-7B \\ L=8 \\ T=0.4 \\ K=3}} 
& \multirow{2}{*}{HumanEval} & BE & $1$ & $6.44 \pm 0.80$ & $6.88 \pm 0.78$ & $6.50 \pm 0.84$ & $\mathbf{6.96 \pm 0.74}$ \\
& & SR & $1$ & $1.18 \pm 0.14$ & $1.20 \pm 0.13$ & $1.15 \pm 0.16$ & $\mathbf{1.31 \pm 0.14}$ \\
& \multirow{2}{*}{GSM8K} & BE & $1$ & $5.96 \pm 0.69$ & $6.61 \pm 0.67$ & $6.13 \pm 0.72$ & $\mathbf{6.74 \pm 0.66}$ \\
& & SR & $1$ & $1.16 \pm 0.14$ & $1.56 \pm 0.17$ & $1.15 \pm 0.14$ & $\mathbf{1.67 \pm 0.18}$ \\
& \multirow{2}{*}{MGSM} & BE & $1$ & $5.94 \pm 1.09$ & $6.40 \pm 1.03$ & $6.11 \pm 1.03$ & $\mathbf{6.65 \pm 1.06}$ \\
& & SR & $1$ & $1.09 \pm 0.23$ & $1.16 \pm 0.29$ & $1.09 \pm 0.20$ & $\mathbf{1.30 \pm 0.32}$ \\
& \multirow{2}{*}{LM1B} & BE & $1$ & $5.04 \pm 1.30$ & $5.71 \pm 1.27$ & $5.30\pm 1.32$ & $\mathbf{5.82 \pm 1.28}$ \\
& & SR & $1$ & $0.91 \pm 0.23$ & $0.97 \pm 0.20$ & $0.92 \pm 0.23$ & $\mathbf{1.01 \pm 0.23}$ \\
& \multirow{2}{*}{Alpaca} & BE & $1$ & $5.57 \pm 0.88$ & $6.17 \pm 0.90$ & $5.79 \pm 0.79$ & $\mathbf{6.32 \pm 0.83}$ \\
& & SR & $1$ & $1.01 \pm 0.16$ & $1.04 \pm 0.14$ & $1.02 \pm 0.18$ & $\mathbf{1.21 \pm 0.22}$ \\
\rowcolor{yellow!50}
& & BE & $1$ & $5.79$ & $6.35$ & $5.97$ & $\mathbf{6.50}$ \\
\rowcolor{yellow!50}& \multirow{-2}{*}{Average}
& SR & $1$ & $1.07$ & $1.19$ & $1.07$ & $\mathbf{1.30}$ \\
\bottomrule
\end{tabular}}
\end{table}

\begin{table}[t]
\centering
\caption{Ablation results of SpecTr-GBV under different draft lengths $L$ with $T=0.4$, $K=3$, in the \texttt{DeepSeek-6.7B-1.3B} setting.}
\label{tab:temp_comparison 6.7b}
\resizebox{0.6\textwidth}{!}{
\begin{tabular}{@{}ccccccccccc@{}} 
\toprule
\multirow{2}{*}{$L$} & 
\multicolumn{2}{c}{AR} & 
\multicolumn{2}{c}{SD} & 
\multicolumn{2}{c}{SpecTr} & 
\multicolumn{2}{c}{GBV} & 
\multicolumn{2}{c}{SpecTr-GBV} \\
\cmidrule(r){2-3} \cmidrule(r){4-5} \cmidrule(r){6-7} \cmidrule(r){8-9} \cmidrule(r){10-11}
 & BE & SR & BE & SR & BE & SR & BE & SR & BE & SR \\
\midrule
\multirow{1}{*}{4} 
 & 1 & 1 & 4.42  & 1.19 & 4.57 & 1.15 & 4.44  & 1.19 & \textbf{4.62} & \textbf{1.26} \\
\midrule
\multirow{1}{*}{8} 
& 1 & 1 & 7.06 & 1.19 & 7.53 & 1.15 & 7.06 & 1.19 & \textbf{7.70} & \textbf{1.28} \\
\midrule
\multirow{1}{*}{12} 
 & 1 & 1 & 9.39  & 1.12 & 10.11 & 1.16 & 9.89  & 1.14 & \textbf{10.29} & \textbf{1.26} \\
\midrule
\multirow{1}{*}{16} 
 & 1 & 1 & 11.23  & 1.06 & 12.22 & 1.13 & 11.58  & 1.08 & \textbf{12.49} & \textbf{1.23} \\
\bottomrule
\end{tabular}}
\end{table}

\textbf{(1) Effect of draft length $L$:} We compare SpecTr-GBV in the \texttt{6.7B-1.3B} setting against baselines under varying draft lengths $L = 4, 8, 12, 16$, with temperature $T = 0.4$ and draft number $K = 3$. 
As shown in \cref{tab:temp_comparison 6.7b}, for $L=4$, SpecTr-GBV achieves relative improvements in the BE metric of $4.5\%$, $1.1\%$, and $4.1\%$ over SD, SpecTr, and GBV, respectively. At $L=16$, the corresponding improvements increase to $11.2\%$, $2.2\%$, and $7.9\%$.
For the SR metric at $L=4$, SpecTr-GBV achieves gains of $5.9\%$, $9.6\%$, and $5.9\%$ compared to SD, SpecTr, and GBV, respectively. At $L=16$, the gains grow to $16.0\%$, $8.8\%$, and $13.9\%$, highlighting that SpecTr-GBV exhibits greater advantages at longer draft lengths.
Notably, as $L$ increases from $4$ to $16$, BE consistently improves, while SR experiences a slight decline, which aligns with our findings in \cref{sec:Ablation Studies}.

\begin{figure}[t]
    \centering
    \begin{subfigure}[t]{0.48\linewidth}
        \centering
        \begin{minipage}{0.48\linewidth}
            \centering
            \includegraphics[width=\linewidth]{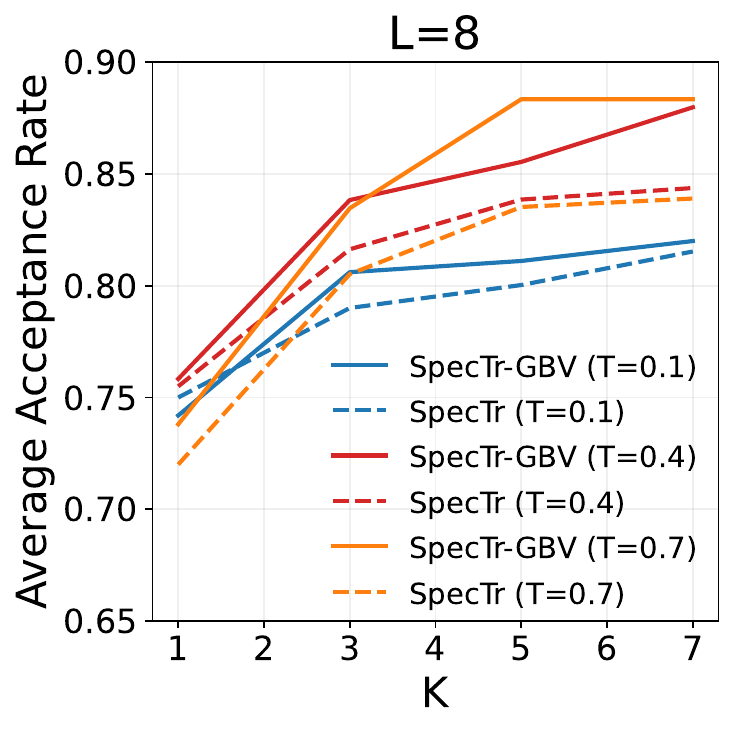}
        \end{minipage}
        \hfill
        \begin{minipage}{0.48\linewidth}
            \centering
            \includegraphics[width=\linewidth]{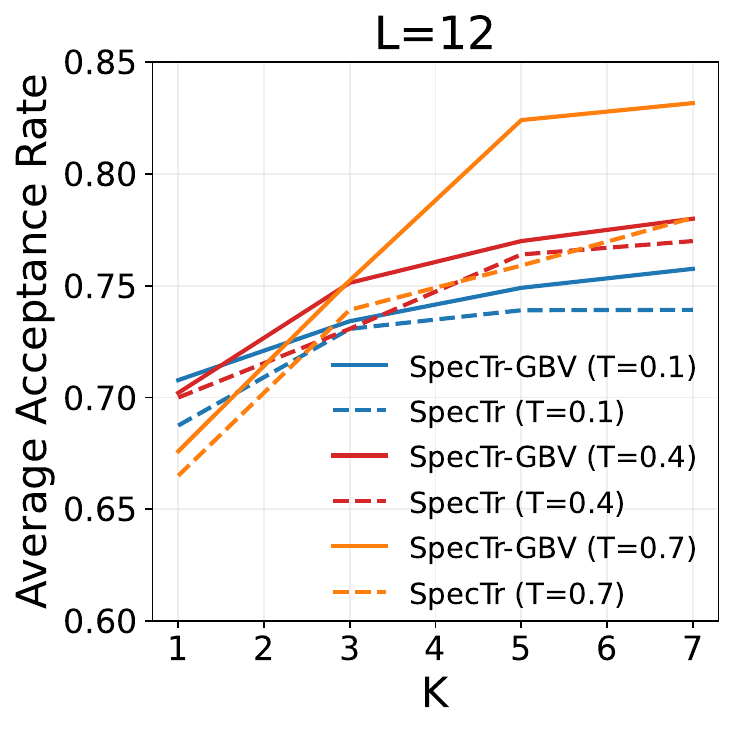}
        \end{minipage}
        \caption{Acceptance rate comparison between SpecTr-GBV and SpecTr with varying $K$ ($L = 8$ and $12$) in the \texttt{DeepSeek-6.7B-1.3B} setting.}
        \label{fig:draft_size 6.7b}
    \end{subfigure}
    \hfill
    \begin{subfigure}[t]{0.5\linewidth}
        \centering
        \begin{minipage}{0.46\linewidth}
            \centering
            \includegraphics[width=\linewidth]{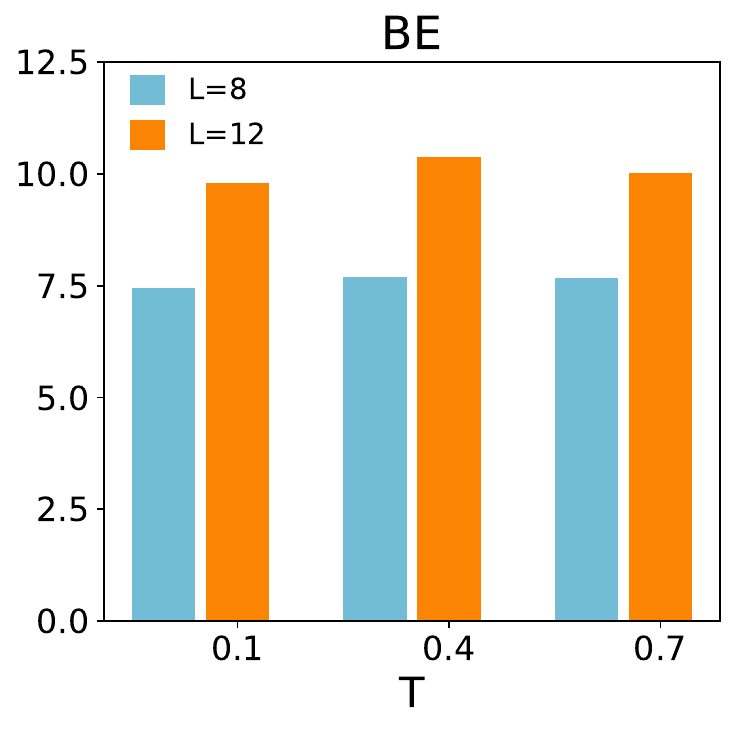}
        \end{minipage}
        \hfill
        \begin{minipage}{0.46\linewidth}
            \centering
            \includegraphics[width=\linewidth]{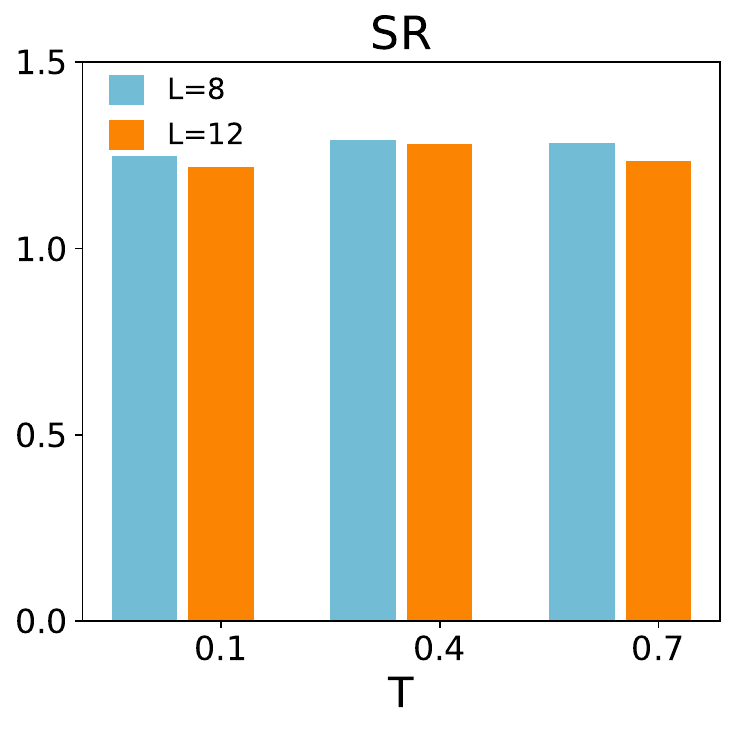}
        \end{minipage}
        \caption{BE and SR performance of SpecTr-GBV with varying temperature $T$ ($L = 8$ and $12$, $K=3$) in the \texttt{DeepSeek-6.7B-1.3B} setting.}
        \label{fig:BE_SR 6.7b}
    \end{subfigure}
    \caption{Ablation results of SpecTr-GBV under different draft number (a) and temperature (b) in the \texttt{DeepSeek-6.7B-1.3B} setting.}
\end{figure}

\textbf{(2) Effect of draft number $K$:} As shown in \cref{fig:draft_size 6.7b}, we compare the acceptance rates of SpecTr-GBV and SpecTr in the \texttt{DeepSeek-6.7B-1.3B} setting under different numbers of draft sequences $K = 1, 3, 5, 7$, with draft lengths $L = 8$ and $12$.
The experimental results are consistent with those observed in the \texttt{DeepSeek-33B-1.3B} setting: as $K$ increases, the acceptance rates of both SpecTr and SpecTr-GBV improve, with SpecTr-GBV consistently outperforming SpecTr by relative margins of $0.78\%$, $1.75\%$, $2.62\%$, and $2.75\%$ for $K = 1, 3, 5$, and $7$, respectively.
Moreover, we observe that the advantage of SpecTr-GBV becomes more pronounced as $K$ increases, indicating its superior scalability with respect to the number of draft sequences. 

\textbf{(3) Effect of temperature $T$:} As shown in \cref{fig:BE_SR 6.7b}, we evaluate the BE and SR of SpecTr-GBV in the \texttt{6.7B-1.3B} setting under different temperatures $T = 0.1, 0.4, 0.7$ with draft lengths $L = 8$ and $12$.
Consistent with the \texttt{DeepSeek-33B-1.3B} setting, both BE and SR remain largely stable across different temperatures, further demonstrating the robustness of SpecTr-GBV to temperature variations.

\end{document}